\definecolor{probnum}{RGB}{150,20,0}   
\definecolor{keywords}{RGB}{100,0,0}   
\definecolor{codebg}{RGB}{250,250,250} 
\lstdefinestyle{prism}{
  basicstyle=\ttfamily\footnotesize\linespread{0.9}\selectfont, 
  keywordstyle=\color{keywords}\bfseries,
  commentstyle=\color{gray!75}\itshape,        
  stringstyle=\color{green!40!black},
  columns=fullflexible,
  keepspaces=true,
  showstringspaces=false,
  numbers=none,
  morekeywords={mdp,module,endmodule,init},
  mathescape=true,
  basewidth=0.5em,                            
  backgroundcolor=\color{lightred!15!white},
  frame=single,
  rulecolor=\color{gray!75},
  framesep=2pt,
  framerule=0.3pt,
  xleftmargin=2pt,
  framexleftmargin=2pt,
  aboveskip=2pt,
  belowskip=1pt,
  breaklines=true,
  breakatwhitespace=false,
  emph={up,stay},
  emphstyle=\color{teal!70!black}\bfseries,
  emph={[2]x,y,x',y',ix,iy},
  emphstyle=[2]\bfseries,
  literate=*
    {0.1}{{\color{probnum}\textbf{0.1}}}{3}
    {0.3}{{\color{probnum}\textbf{0.3}}}{3}
    {0.4}{{\color{probnum}\textbf{0.4}}}{3}
    {0.6}{{\color{probnum}\textbf{0.6}}}{3}
    {0.7}{{\color{probnum}\textbf{0.7}}}{3}
    {0.8}{{\color{probnum}0.8}}{3}
    {0.9}{{\color{probnum}\textbf{0.9}}}{3}
    {1.0}{{\color{probnum}1.0}}{3},
}
\definecolor{midgray}{gray}{0.375}
\newcommand{\argsup}{\mathop{\mathrm{argsup}}\limits}
\newtheorem{theorem}{Theorem}    
\theoremstyle{definition}
\newtheorem{example}[theorem]{Example}
\theoremstyle{remark}
\def\techreport{}
\definecolor{lightred}{RGB}{241, 225, 222}
\definecolor{color1}{rgb}{0.1,0.498039215686275,0.9549019607843137}
\definecolor{alizarin}{rgb}{0.82, 0.1, 0.26}
\definecolor{antiquewhite}{rgb}{0.98, 0.92, 0.84}
\definecolor{azure}{rgb}{0.94, 1.0, 1.0}
\definecolor{offwhite}{rgb}{0.98, 0.95, 0.95}
\definecolor{pigment}{rgb}{0.2, 0.2, 0.6}
\title{Efficient Solution and Learning of Robust Factored MDPs}
\author {
    Yannik Schnitzer, Alessandro Abate, David Parker
}
\begin{document}

\maketitle

\begin{abstract}
Robust Markov decision processes (r-MDPs) extend MDPs by explicitly modelling epistemic uncertainty about 
transition dynamics. Learning r-MDPs from interactions with an unknown environment enables the synthesis of robust policies with provable (PAC) guarantees on performance, but this can require a large number of sample interactions.
We propose novel methods for solving and learning r-MDPs
based on \emph{factored} state-space representations
that leverage the independence between model uncertainty across system components. Although policy synthesis for factored r-MDPs leads to hard, non-convex optimisation problems, we show how to reformulate these into tractable linear programs. Building on these, we also propose methods to learn factored model representations directly.
Our experimental results show that exploiting factored structure can yield dimensional gains in sample efficiency, producing more effective robust policies with tighter performance guarantees than state-of-the-art methods.

\end{abstract}


\section{Introduction}
\label{sec:intro}

\emph{Markov decision processes} (MDPs) are the standard modelling framework for sequential decision-making under uncertainty. However, real-world dynamics are often complex and not fully known. In safety-critical settings, it is therefore essential to reason about \textit{epistemic uncertainty}, due to incomplete knowledge of the environment, and to construct \textit{robust} policies that provide provable performance guarantees on the unknown environment they operate in.

\emph{Robust Markov decision processes} (r-MDPs)~\cite{DBLP:journals/mor/WiesemannKR13}
extend MDPs by not requiring every transition probability to be known precisely but only restricting them to lie in a given \emph{uncertainty set}. These uncertainty sets are typically derived from data, e.g., observed interactions with the unknown system, as in reinforcement learning (RL). Learning for r-MDPs, however, does not optimise for expected performance alone; rather, it enables the synthesis of policies that are robust with respect to the current epistemic uncertainty in the transition dynamics and provides provable \emph{Probably Approximately Correct} (PAC) guarantees on performance with high confidence~\cite{DBLP:conf/icml/StrehlL05,DBLP:conf/nips/SuilenS0022}.

Unlike robust RL approaches, that often focus on heuristic or empirical training for difficult scenarios~\citep{DBLP:conf/nips/MorimotoA02,DBLP:conf/icml/PintoDSG17}, r-MDP learning operates on explicit uncertainty sets learned from data and yields formal anytime guarantees on worst-case performance under the true but unknown transition model.

A practical limitation of r-MDP learning and policy synthesis, however, is that, to achieve high-confidence performance guarantees, the overall confidence level must be distributed across all transition distributions~\citep{DBLP:conf/icml/StrehlL05} or individual transition probabilities~\cite{DBLP:conf/nips/SuilenS0022} being learnt. In large-scale environments, this enforces stringent confidence requirements, requiring a high number of samples to construct tight uncertainty sets that yield effective robust policies with meaningful guarantees.

Many real-world domains come with structural knowledge that permits distinct features of the state space to be modelled independently, giving rise to the model of \emph{factored} MDPs (f-MDPs)~\cite{DBLP:conf/ijcai/KollerP99,DBLP:journals/jair/BoutilierDH99}. RL algorithms have been extended to exploit this factored structure~\cite{DBLP:conf/ijcai/KearnsK99,DBLP:conf/icml/GuestrinPS02,Strehl2007}, often yielding exponential improvements in sample efficiency over learning in the \emph{flat} (non-factored) representation. While these methods come with PAC guarantees, ensuring that a near-optimal policy is learned with high probability in time polynomial in the factored representation, existing work focuses on expected performance and convergence rather than providing
provable guarantees on worst-case performance.

In this work, we introduce a \emph{robust factored MDP} framework, which leverages structural independence to construct uncertainty sets for each state factor rather than for a flat model. We show that robust policy synthesis in this setting leads to intractable non-convex optimisation problems, but that for standard uncertainty classes, such as confidence intervals, $L_{1}$ balls and general polytopes, these problems admit exact convex reformulations. To address the computational challenges of the resulting, potentially exponential constraint sets, we leverage convex relaxations that preserve tight performance guarantees while enabling efficient solution. We show that our method synthesises more effective robust policies with high-confidence performance guarantees that are substantially tighter than those of prior factored MDP learning approaches.  Furthermore, we show that exploiting the factored structure can improve the sample efficiency of robust policy learning by orders of magnitude compared to state‐of‐the‐art methods in flat representations.

\section{Problem Formulation}
\label{sec:prelim}

The set of all probability distributions over a finite set $Y$ is denoted by $\Delta(Y) = \{\mu\colon Y \to [0,1] \mid \sum_{y \in Y} \mu(y) = 1\}$. For convenience, we also represent distributions as vectors in the probability simplex, $(p_1, \dots, p_{|Y|}) \in \Delta_{|Y|}$, where $p_i = \mu(y_i)$ under a fixed ordering of the elements of $Y$.

\subsection{MDPs and Factored MDPs}
A \emph{Markov decision process} (or \emph{MDP}) is a tuple $M = (S, A, T, r)$, where $S$ and $A$ are finite sets of states and actions, $T \colon S \times A \to \Delta(S)$ is a transition probability function, and $r \colon S \times A \to \mathbb{R}$ is a reward function. 
A \emph{policy} is a mapping $\pi \colon (S \times A)^* \times S \to \Delta(A)$ that resolves the non-determinism by selecting a distribution over actions based on the current state and past interactions. The interaction between a policy and an MDP induces infinite sequences (or \emph{paths}) of the form $s^0 a^0 s^1 a^1 \!\dots$, where at each step, the next action is drawn from the distribution assigned by the policy, given the current history prefix, and the next state is drawn from the transition distribution $T(\, \cdot \, | s, a)$.

A \emph{factored MDP} (or \emph{f-MDP}) is an MDP in which states are represented as vectors of $n$ components. 
Each \emph{factor} $i$ (also referred to as a \emph{state variable} or \emph{state marginal}) takes values from a finite domain $\mathcal{D}_i$. Hence, states are tuples $s = (s_1, \dots, s_n)$, with $s_i \in \mathcal{D}_i$.
To capture the (in-)dependence between factors, we adopt the framework of~\citet{Strehl2007}. Given an arbitrary set $\mathcal{I}$ of \emph{dependency identifiers}, a function $D_i \colon S \times A \to \mathcal{I}$ is a \emph{dependency function} for factor $i$. The transition function is then defined as
\begin{equation}
    \label{eq:transfunc}
    T(s' | s, a) = \prod_{i = 1}^{n} P(s_i' | D_i(s, a)),
\end{equation}
where $s_i'$ denotes the $i$-th component of the next state $s'$ 
and each $P(\, \cdot \, | D_i(s, a)) \in \Delta(\mathcal{D}_i)$ specifies the \emph{marginal probability distribution} of the respective factor. 

\begin{example}\label{ex:1}
A classic example of a factored MDP is the System Administrator domain (see \citet{DBLP:journals/jair/GuestrinKPV03} for a detailed description). An administrator controls $n$ machines (the factors), each of which can be either operational or in a failure state. The machines form a network: each machine is connected to a subset of the others and has a small probability of failing at the next step. This probability depends only on whether its neighbouring machines are operational and may increase substantially if one of them fails, but is independent of all other, non-neighbouring machines. 

For machine $i$, the corresponding dependency identifier therefore encodes the current state of machine $i$ together with the states of all its neighbours. The dependency function $D_i$ acts as a projection that extracts these components from the global state $s = (s_1, \dots, s_n)$. If one or more neighbours of machine $i$ are in a failure state, then the marginal probability $P(s_i' = \mathit{fail} \mid D_i(s, a))$ that machine $i$ fails in the next step increases. 
\end{example}

\begin{example}\label{ex:prism}
Another natural and established way of specifying factored MDPs is via the modelling language of the PRISM verification tool~\cite{DBLP:conf/cav/KwiatkowskaNP11}. This is a guarded-command-style language with a modular structure that matches
the factored MDP representation. Each PRISM
\emph{module} encodes the local dynamics of one component and thus corresponds
naturally to a state factor. Its behaviour is given by commands of the form
\[[\texttt{action}]~\mathit{guard} \rightarrow \sum_k p_k : \mathit{update}_k
,\]
where the guard is a predicate over the global state and the updates define a
local (i.e., \emph{marginal}) probability distribution over the module's next state. 
As an illustrative example, we consider the following excerpt from an aircraft collision avoidance model~\citep{kochenderfer2015} written in the PRISM language, with an ownship and an intruder moving on a grid (full details of this domain can be found in Appendix~\ref{app:benchmarks}):
\vspace{6pt}
\begin{lstlisting}[style=prism,mathescape=true]
module ownship
  x : [0..X] init 0   $\color{midgray}\textit{// horizontal position}$
  y : [0..Y] init 0   $\color{midgray}\textit{// altitude}$

  $\color{midgray}\textit{// [up]: dependency on closeness to intruder}$
  [up] $\parallel$(x,y) - (ix,iy)$\parallel_1$ >= 5 ->  0.9 : (y' = y + 1) 
                                 + 0.1 : (y' = y)
  [up] $\parallel$(x,y) - (ix,iy)$\parallel_1$ < 5  ->  0.6 : (y' = y + 1) 
                                 + 0.4 : (y' = y)
  ...
endmodule

module intruder
  ix : [0..IX] init IX $\color{midgray}\textit{// intruder horizontal position}$
  iy : [0..IY] init 0  $\color{midgray}\textit{// intruder altitude}$

  $\color{midgray}\textit{// [up]: no dependency on ownship}$
  [up] iy < 10 -> 0.7 : (iy' = iy + 1) 
                + 0.3 : (iy' = iy)
  ...
endmodule
\end{lstlisting}
\vspace{6pt}
The global state is $s = (x,y,ix,iy)$, which factorises into an ownship factor $(x,y)$
and an intruder factor $(ix,iy)$. For the action \texttt{up}, the ownship module has two
guards: one for states where the intruder is \emph{far}
($\lVert(x,y) - (ix,iy)\rVert_1 \geq 5$) and one where it is \emph{near}. This corresponds to a dependency in which the dynamics of the ownship change with respect to the intruder, for example through altered aerodynamic conditions when the intruder is near, resulting in a reduced success probability of the intended upward motion.

In the factored MDP notation, we capture the two guards for the \texttt{up}
command of the ownship module by introducing two dependency identifiers,
denoted $j_{\mathrm{up}}^{\mathit{far}}$ and $j_{\mathrm{up}}^{\mathit{near}}$.
The dependency function $D_{\mathrm{own}}(s,\texttt{up})$ simply evaluates the
guard predicate $\lVert (x,y) - (ix,iy) \rVert_1 \ge 5$ in state $s$ and returns
$j_{\mathrm{up}}^{\mathit{far}}$ if it holds and $j_{\mathrm{up}}^{\mathit{near}}$
otherwise.
Thus, all global states with the same
near--far relation share the same local transition distribution for the ownship 
$P_{\mathrm{own}}(\cdot \mid D_{\mathrm{own}}(s,\texttt{up}))$.
The intruder command depends only on its own local variables $(ix,iy)$. Accordingly, the dependency function $D_{\mathrm{int}}(s,\texttt{up})$
ignores $(x,y)$ and only inspects $(ix,iy)$. In particular, it is constant over all global states that agree on $(ix,iy)$. 

Under PRISM's synchronous composition semantics, the local
updates combine into the global transition:
\[
\begin{aligned}
T(s' | s,a)
=& P_{\mathrm{own}}\bigl( (x',y') \mid D_{\mathrm{own}}(s,a) \bigr) \\
\quad\cdot & P_{\mathrm{int}}\bigl( (ix',iy') \mid D_{\mathrm{int}}(s,a) \bigr),
\end{aligned}
\]
which is exactly the product form in Eq.~\eqref{eq:transfunc}. In
this way, PRISM modules correspond to factors, guards implement the dependency
identifiers $D_i(s,a)$, and the local updates to module states on the right-hand side of each command specify the
marginal transition distributions $P(\cdot \mid D_i(s,a))$.
\end{example}

\subsection{Robust Factored MDPs}
\emph{Robust factored MDPs} (or \emph{rf-MDPs})~\cite{conf/isipta/DelgadoBCS09,journals/corr/abs-2404-02006} extend factored MDPs to incorporate epistemic uncertainty about transition dynamics. They generalise fixed marginal transition distributions $P(\, \cdot \,| D_i(s, a)) \in \Delta(\mathcal{D}_i)$ to \emph{marginal uncertainty sets} $\mathcal{P}(D_i(s, a)) \subseteq \Delta(\mathcal{D}_i)$. The overall uncertainty set of possible transition distributions  at $(s,a)$ is then defined as: 
\begin{equation}
    \label{eq:transunc}
    \mathcal{T}(s, a) = \bigotimes_{i=1}^n \mathcal{P}(D_i(s, a)),
\end{equation}
where $\otimes$ denotes the outer product (or Kronecker product) of distributions, extended to sets. Specifically, for sets $\mathcal{P} \subseteq \Delta(\mathcal{D}_i)$ and $\mathcal{Q} \subseteq \Delta(\mathcal{D}_j)$, the product is defined as
\begin{equation}
\mathcal{P} \otimes \mathcal{Q} = \{P \otimes Q \mid P \in \mathcal{P},\, Q \in \mathcal{Q}\},
\end{equation}
where for distributions $P = (p_1, \dots, p_m)$ and $Q = (q_1, \dots, q_k)$, their outer product is given by
\begin{equation}
\label{eq:productdists}
(P \otimes Q)_{ij} = p_i \cdot q_j, \quad 1\leq i \leq m, 1 \leq j \leq k.
\end{equation}
Hence, \(\mathcal{T}(s, a)\) comprises all product distributions over the factor-wise uncertainty sets, providing a structured representation of the uncertainty over the full state space \(S\). 

As in standard robust MDPs~\citep{DBLP:journals/ior/NilimG05,DBLP:journals/mor/Iyengar05,DBLP:journals/mor/WiesemannKR13}, rf-MDPs introduce an additional step in the evolution of the process: at a given state $s$, before the next state is determined following the selection of action $a$, an \emph{environment policy}~$\tau$ selects, for each factor $i$, a marginal distribution from the corresponding uncertainty set $\mathcal{P}(D_i(s, a))$. These combine into a product distribution, as per Eq.~\eqref{eq:transfunc}, which lies in the overall uncertainty set $\mathcal{T}(s, a)$ and defines the probability distribution from which the successor state is drawn.

\paragraph{Objectives.} An \emph{objective} is a mapping $R$ that assigns a return to each infinite path $\rho = s^0 a^0 s^1 a^1 \! \dots$ in an rf-MDP $\tilde{M}$. Given a pair of agent and environment policies $\pi$ and $\tau$, we denote by $\mathbb{E}^{\pi,\tau}_{\tilde{M},s}$ the induced expectation over paths starting in state $s$~\citep{DBLP:conf/cdc/WolffTM12}. The \emph{value} of $s$ under $\pi$ and $\tau$ with respect to objective $R$ is defined as
\begin{equation}
    V^{\pi,\tau}_{\tilde{M}}(s) = \mathbb{E}^{\pi,\tau}_{\tilde{M},s} [R]. 
\end{equation}
Unless stated otherwise, our results are agnostic to the specific choice of objective. The most common objective is the discounted cumulative reward:
\begin{equation}
    \label{eq:discountedreward}
    R(\rho) = \sum_{t = 0}^{\infty} \gamma^t r(s^t, a^t),
\end{equation}
for some discount factor $0 < \gamma < 1$. However, our results readily extend to other objectives, such as undiscounted rewards~\cite{DBLP:conf/icml/Schwartz93,DBLP:books/wi/Puterman94,DBLP:conf/aaai/MeggendorferWW25} or reachability goals focussing on the probability of eventually reaching a target set of states, possibly whilst avoiding certain undesirable states.

\paragraph{Robust Values and Policies}
The \emph{optimal robust policy} $\pi^*$ in an rf-MDP $\tilde{M}$ is the policy that achieves, in every state, the \emph{optimal robust value} $V^{*}_{\tilde{M}}(s)$, which is the best possible value under the worst-case environment policy. Formally:
\begin{align}
V^{*}_{\tilde{M}}(s) &= \sup_{\pi} \inf_{\tau} V^{\pi,\tau}_{\tilde{M}}(s), \text{ and}\\
\pi^* &= \argsup_{\pi} \inf_{\tau} V^{\pi,\tau}_{\tilde{M}}(s).
\end{align}
In this paper, we implicitly assume that the agent aims to maximise the objective while the environment adversarially seeks to minimise it. All results remain valid under the dual case with reversed roles~\citep{DBLP:journals/ior/NilimG05}. It is straightforward to verify that the policy $\pi^*$ guarantees at least the value $V^{*}_{\tilde{M}}(s)$ on any concrete f-MDP obtained by fixing specific distributions from the uncertainty sets.

Next, in Section \ref{sec:solving}, we present novel methods for efficiently and accurately solving rf-MDPs, i.e., computing optimal robust values and policies, assuming polytopic marginal uncertainty sets, such as the commonly used $L_1$, $L_\infty$ balls and general $L_p$ balls. Then, in Section \ref{sec:learning}, we leverage these methods to efficiently learn robust policies with provable performance guarantees in unknown f-MDPs.

\section{Solving Robust Factored MDPs}
\label{sec:solving}

As for standard robust MDPs, the optimal value function \(V^{*}_{\tilde{M}}\) and a corresponding robust policy for an rf-MDP can be computed with 
\emph{robust value iteration}~\citep{DBLP:journals/mor/Iyengar05,DBLP:journals/ior/NilimG05}.  
Assuming \emph{rectangular} uncertainty sets, meaning that each state–action pair has an independent uncertainty set over which the environment can act adversarially, the global problem decouples into a local optimisation at every state. For any state \(s\), the agent selects an action \(a\in A\) that maximises the worst-case expected return over all transition kernels in \(\mathcal{T}(s,a)\), yielding the robust Bellman equation, where \(V^{*}_{\tilde{M}}(s)\) equals:
\begin{equation}
\label{eq:bellman}
 \max_{a\in A}
  \underbrace{%
      \min_{T\in\mathcal T(s,a)}\!\bigl[r(s,a)\;+\gamma\sum_{s'\in S}T(s'|s,a)\,V^{*}_{\tilde{M}}(s')\bigr].
  }_{\text{Inner Optimisation}}
\end{equation}
The inner optimisation captures the environment’s adversarial choice of a transition kernel within \(\mathcal{T}(s,a)\).
For standard (non-factored) robust MDPs,
this is tractable when \(\mathcal{T}(s,a)\) has a favourable geometry: e.g., an \(L_{1}\) or \(L_{\infty}\) ball, which is solvable via bisection in time linear-logarithmic in the support size~\citep{DBLP:conf/icml/StrehlL05}, or a polytope described by a number of vertices or half-spaces that can be solved via linear programming~\citep{DBLP:journals/ior/NilimG05}.  

rf-MDPs, however, induce uncertainty sets \(\mathcal{T}(s,a)\) as the multilinear product of marginal sets (see Equation~\eqref{eq:transunc}). Even if every marginal \(\mathcal{P}(D_i(s,a))\) is convex, convexity is in general not preserved under the product;  consequently, \(\mathcal{T}(s,a)\) can in general be non-convex (see Figure~\ref{fig:product}), rendering the inner optimisation hard and often intractable.  

We show that when the marginals are polytopes, the associated non-linear problem admits an exact linear reformulation whose constraints follow directly from the polytopic descriptions of the marginals. However, the number of resulting constraints can grow rapidly for many common classes of uncertainty sets. To retain tractability, we construct tight linear overapproximations of \(\mathcal{T}(s,a)\), yielding robust Bellman updates that allow for an efficient and accurate solution.

\subsection{Exact Products of Polytopic Uncertainty Sets}

We consider polytopic marginal uncertainty sets $\mathcal{P}$ defined as the convex hull of finitely many extreme distributions, i.e.,
$
\mathcal{P} = \mathrm{conv}\{P^{(1)}, \dots, P^{(m)}\} := \{ \sum_{i=1}^m \lambda_i P^{(i)} \,|\, \lambda_i \ge 0,\ \sum_{i=1}^m \lambda_i = 1 \}.
$
We first prove that the resulting inner optimisation problem in~\eqref{eq:bellman}, taken over the non-convex product uncertainty set $\mathcal{T}(s,a)$, admits an exact linear reformulation. In contrast to prior approaches for solving robust factored MDPs~\citep{DBLP:journals/ai/DelgadoSB11}, this result allows us to avoid the invocation of an expensive and potentially approximate non-linear solver. It builds on two key observations: first, by the bilinearity of the Kronecker product $\otimes$~\cite{DBLP:books/cu/HornJ91}, the convex hull of $\mathcal{T}(s,a)$ is a polytope whose extreme points are precisely the pairwise products of the extreme distributions of the marginal polytopes~\cite{DBLP:books/sp/HorstT96}, and second, the inner optimisation is linear in the transition probabilities and thus attains its optimum at a vertex of the convex hull.

\setcounter{theorem}{0}
\begin{theorem}\label{thm:vertex}
  Let 
  $\mathcal{P} = \mathrm{conv}\{P^{(1)}, \dots, P^{(m)}\} \subseteq \Delta_M$ 
  and 
  $\mathcal{Q} = \mathrm{conv}\{Q^{(1)}, \dots, Q^{(k)}\} \subseteq \Delta_N$
  be polytopic marginal uncertainty sets. Then the corresponding non-linear inner optimisation problem in Equation~\eqref{eq:bellman} attains its optimum at one of the products of the marginal extreme distributions:
  \[
    \left\{\, P^{(i)} \otimes Q^{(j)} \;\middle|\; 1 \le i \le m,\; 1 \le j \le k \,\right\}.
  \]
\end{theorem}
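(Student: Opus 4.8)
The plan is to combine two facts flagged in the text: the inner objective is affine in the transition kernel, and the convex hull of the product set $\mathcal{P}\otimes\mathcal{Q}$ is finitely generated by the pairwise products of extreme distributions. Viewing a kernel $T$ as a vector in the product simplex $\Delta_{MN}$ and writing the inner objective as $f(T) = r(s,a) + \gamma\langle \bm{V}, T\rangle$ with $\bm{V} = (V^{*}_{\tilde{M}}(s'))_{s'}$, the map $f$ is affine. For any set $\mathcal{S}$ this gives $\min_{T\in\mathcal{S}} f(T) = \min_{T\in\mathrm{conv}(\mathcal{S})} f(T)$ (both minima exist since $\mathcal{P},\mathcal{Q}$, hence $\mathcal{P}\otimes\mathcal{Q}$, are compact), and over a polytope an affine function is minimised at a vertex.

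The crux is the identity $\mathrm{conv}(\mathcal{P}\otimes\mathcal{Q}) = \mathrm{conv}\{P^{(i)}\otimes Q^{(j)} : 1\le i\le m,\ 1\le j\le k\}$. The inclusion $\supseteq$ is immediate, since each $P^{(i)}\otimes Q^{(j)}\in\mathcal{P}\otimes\mathcal{Q}$. For $\subseteq$, take $P = \sum_i\lambda_i P^{(i)}$ and $Q = \sum_j\mu_j Q^{(j)}$ with $\lambda,\mu$ in their simplices; bilinearity of the Kronecker product yields $P\otimes Q = \sum_{i,j}\lambda_i\mu_j\,(P^{(i)}\otimes Q^{(j)})$, and the weights $\lambda_i\mu_j\ge 0$ sum to $(\sum_i\lambda_i)(\sum_j\mu_j)=1$, so $P\otimes Q$ is a convex combination of the $P^{(i)}\otimes Q^{(j)}$; hence $\mathcal{P}\otimes\mathcal{Q}\subseteq\mathrm{conv}\{P^{(i)}\otimes Q^{(j)}\}$ and taking convex hulls gives equality. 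In particular $\mathrm{conv}(\mathcal{T}(s,a))$ is a polytope whose vertex set is contained in $\{P^{(i)}\otimes Q^{(j)}\}$.

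Putting the pieces together: $\min_{T\in\mathcal{T}(s,a)} f(T) = \min_{T\in\mathrm{conv}(\mathcal{T}(s,a))} f(T)$, and the right-hand minimum is attained at a vertex of $\mathrm{conv}(\mathcal{T}(s,a))$, i.e.\ at some product $P^{(i)}\otimes Q^{(j)}$. Since every such product already lies in $\mathcal{T}(s,a)$, this point is feasible for the original non-convex problem, so the optimum of the inner optimisation is attained there, as claimed. The case of $n>2$ factors follows by the same argument applied inductively, using bilinearity of $\otimes$ in each argument to obtain $\mathrm{conv}(\bigotimes_{i=1}^n\mathcal{P}_i) = \mathrm{conv}\{\bigotimes_{i=1}^n P_i^{(j_i)}\}$.

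The step needing the most care is the convex-hull identity: because $\mathcal{P}\otimes\mathcal{Q}$ is in general non-convex (Figure~\ref{fig:product}), one has to check that its convex hull nevertheless has no extreme points beyond the pairwise products, which is precisely what bilinearity delivers; the remaining ingredients — affineness of the Bellman inner term and vertex-optimality of linear programs over polytopes — are standard and require no non-linear solver, in contrast to prior work.
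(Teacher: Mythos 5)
Your proposal is correct and follows essentially the same route as the paper's proof: establish $\mathrm{conv}(\mathcal{P}\otimes\mathcal{Q})=\mathrm{conv}\{P^{(i)}\otimes Q^{(j)}\}$ via bilinearity of $\otimes$ (both inclusions), then use affineness of the inner objective to place the optimum at a vertex of the convex hull. Your explicit remark that the optimising vertex is itself a member of the original non-convex set $\mathcal{T}(s,a)$ --- and hence feasible for the original problem --- is a small point the paper leaves implicit, but the argument is otherwise identical.
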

\ifthenelse{\isundefined{\techreport}}{The full proofs for all presented results are provided in the extended version.}{The full proof is provided in Appendix~\ref{app:proofs}.}
Theorem~\ref{thm:vertex} inductively extends to any number of marginals and offers a direct approach to solving the inner optimisation problem \emph{exactly} by enumerating the product vertices induced by the marginal uncertainty sets of each factor. However, the number of such vertices can grow rapidly, rendering explicit enumeration computationally infeasible, even for standard classes of uncertainty sets arising from statistical estimation~\citep{DBLP:conf/birthday/SuilenBB0025}. For example, when the marginal sets are defined as $L_1$ or $L_\infty$ balls centred around a nominal distribution, the number of vertices per marginal can grow exponentially in the support size. \ifthenelse{\isundefined{\techreport}}{A detailed construction can be found in the extended version.}{A detailed construction can be found in Appendix~\ref{app:vertexgrwoth}.}

\subsection{Efficient Solutions through Relaxations}
\label{sec:relaxations}

To mitigate the potential intractability of the exact inner optimisation, we use \emph{relaxations}, i.e., overapproximations of the uncertainty set $\mathcal{T}(s,a)$ that trade exactness for tractability. Since the relaxed set is a superset of the true one, the value returned by the relaxed Bellman operator is a \emph{lower bound} on the exact robust value, guaranteeing that the resulting policy never underperforms against any transition kernel in the original uncertainty set. This sound, worst-case guarantee distinguishes our approach from earlier methods for rf-MDPs, which rely on approximate value‐function fitting over a fixed basis~\citep{conf/isipta/DelgadoBCS09,DBLP:journals/ai/DelgadoSB11,journals/corr/abs-2404-02006}. Such schemes provide no formal bound on the policy’s performance and therefore cannot in general provide safety guarantees as required in robust learning.  

\begin{figure}[t]
  \centering
  \includegraphics[width=0.472\textwidth]{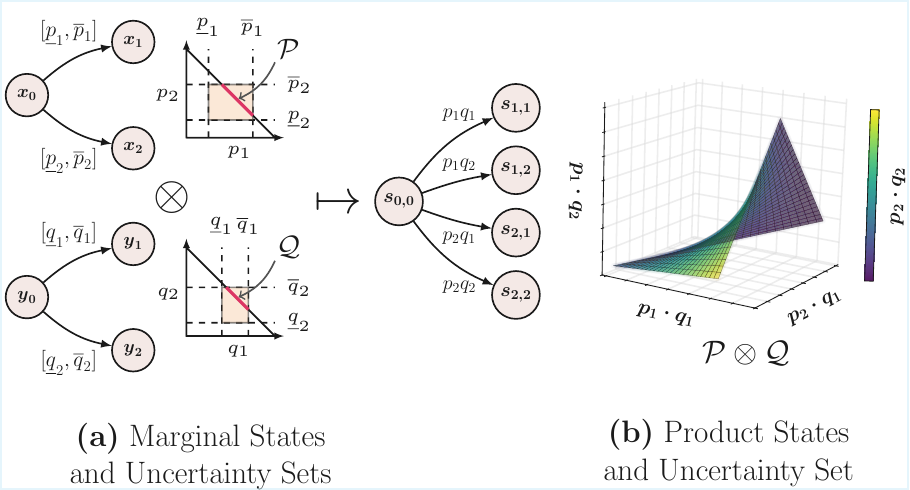}
  \caption{Part (a) shows two factors of an rf-MDP, with convex marginal uncertainty sets \(\mathcal{P}\) and \(\mathcal{Q}\), which are line segments in the two-dimensional probability simplex. The resulting product uncertainty set $\mathcal{P} \otimes \mathcal{Q}$ in (b) is non-convex.}
  \label{fig:product}
\end{figure}

We aim for  \emph{tight} relaxations, admitting as few spurious distributions (i.e., distributions not in the true set) as possible. An overly loose relaxation can lead to a pessimistic bound, and result in an unnecessarily conservative policy.  

We first consider marginal uncertainty sets that take the form of \emph{boxes} (or \emph{hyper-rectangles}) intersected with the probability simplex, which are generalisations of $L_\infty$ balls. 

These arise naturally when individual transition probabilities are estimated from observed data using confidence intervals~\citep{DBLP:conf/icml/StrehlL05,DBLP:conf/nips/SuilenS0022}.
A box is defined by lower and upper bounds $\underline{p}, \overline{p} \in [0,1]^N$ on each component of a probability distribution, with $\underline{p}_i \leq \overline{p}_i$ for all $1 \leq i \leq N$, yielding the uncertainty set
\begin{equation}
    \mathcal{P}_B = \left\{ (p_1, \dots, p_N) \in \Delta_N \,\middle|\, \underline{p}_i \leq p_i \leq \overline{p}_i \right\}.
\end{equation}
Robust MDPs defined in this way are called \emph{interval} or \emph{bounded-parameter MDPs}~\citep{DBLP:journals/ai/GivanLD00}.

\paragraph{Interval-Arithmetic Relaxation.}
A natural relaxation for products of distributions in interval MDPs is to use \emph{interval arithmetic}. In fact, this approach is taken in the modelling language of the PRISM tool~\citep{DBLP:conf/cav/KwiatkowskaNP11}, which supports compositional modelling of interval MDPs.
Given two box-type uncertainty sets $\mathcal{P}_B \subseteq \Delta_M$ and $\mathcal{Q}_B \subseteq \Delta_N$ with respective bounds $\underline{p}, \overline{p} \in [0,1]^M$ and $\underline{q}, \overline{q} \in [0,1]^N$, the corresponding interval-arithmetic relaxation $\mathcal{R}_{ia} \subseteq \Delta_{M\cdot N}$ is defined as
\begin{equation}
    \mathcal{R}_{ia} = \left\{ H \in \Delta_{M\cdot N} \,\middle|\, \underline{p}_i \, \underline{q}_j \leq h_{ij} \leq \overline{p}_i \, \overline{q}_j \right\}.
\end{equation}
While the interval-arithmetic relaxation is tight with respect to each component individually, it fails to capture dependencies across components and can therefore introduce a large amount of spurious 
distributions~\citep{DBLP:journals/corr/HashemiHT16,DBLP:conf/hybrid/MathiesenHL25}. In particular, it admits  spurious extreme points, potentially leading to overly conservative solutions in the inner optimisation problem, as we demonstrate in the following example.

\begin{example}
Consider the two box-type uncertainty sets:
\begin{align*}
&\mathcal{P}_B=\{(p,1-p) \in \Delta_2 \mid p\in[0.2,0.6]\}, \text{and}\\
&\mathcal{Q}_B=\{(q,1-q) \in \Delta_2 \mid q\in[0.1,0.3]\}.
\end{align*}
Their interval-arithmetic product relaxation $\mathcal{R}_{ia}$ is:
\[
[0.02,0.18]\times[0.14,0.54]\times[0.04,0.24]\times[0.28,0.72] \cap \Delta_4.\]
Now consider 
\(
H=(0.18,\,0.14,\,0.24,\,0.44)\in\mathcal{R}_{ia}.
\)
We can verify that $H$ is a vertex of $\mathcal{R}_{ia}$, as three bounds are tight. Since \(h_1=0.18=pq\), the box constraints imply that \(p=0.6\) and \(q=0.3\). But then it must be that:
\[
\big(p(1-q),\,(1-p)q,\,(1-p)(1-q)\big)
=(0.42,\,0.12,\,0.28),
\]
so the only valid product distribution with \(pq=0.18\) is \((0.18,0.42,0.12,0.28)\neq H\). Thus \(H\)  is not contained in the actual product uncertainty set \(\mathcal{P}_B \otimes \mathcal{Q}_B\). 
\end{example}

\paragraph{McCormick Relaxation.}

In order to tackle the issue of spurious distributions in interval-arithmetic relaxations,
and the conservative solutions to the inner optimisation problem in~\eqref{eq:bellman} that may result,
we draw on results from non-linear global optimisation and employ \emph{McCormick envelopes}~\citep{DBLP:journals/mp/McCormick76}. These provide tight convex relaxations of multilinear products through a polynomial number of linear constraints, yielding a tractable linear program that closely approximates the original non-linear formulation.

For two variables \(p\in[\underline{p},\overline{p}]\) and \(q\in[\underline{q},\overline{q}]\), the McCormick envelopes 
are defined by the following linear inequalities:
\begin{subequations}
\label{eq:mccormick}
\begin{align}
h &\ge p\,\underline{q} + q\,\underline{p} - \underline{p}\,\underline{q}, \label{eq:mc-lb1}\\
h &\ge p\,\overline{q} + q\,\overline{p} - \overline{p}\,\overline{q}, \label{eq:mc-lb2}\\
h &\le p\,\underline{q} + q\,\overline{p} - \overline{p}\,\underline{q}, \label{eq:mc-ub1}\\
h &\le p\,\overline{q} + q\,\underline{p} - \underline{p}\,\overline{q}. \label{eq:mc-ub2}
\end{align}
\end{subequations}
Each inequality arises from combining the bounds on \(p\) and \(q\). For instance, since \(p \ge \underline{p}\) and \(q \ge \underline{q}\), we have
\[
(p - \underline{p})(q - \underline{q}) \ge 0.
\]
Expanding and substituting \(h = pq\) gives
\[
pq - p\,\underline{q} - q\,\underline{p} + \underline{p}\,\underline{q} \ge 0
\,\Longrightarrow\,
h \ge p\,\underline{q} + q\,\underline{p}  - \underline{p}\,\underline{q},
\]
which is precisely Equation~\eqref{eq:mc-lb1}. Despite their simplicity, 
these inequalities suffice to exactly characterise the convex hull of a single bilinear product $h = pq$~\citep{DBLP:journals/mp/McCormick76}. 

When applied to the inner optimisation in Equation~\eqref{eq:bellman} over a product uncertainty set as per Equation~\eqref{eq:transunc}, each bilinear term \(p_i q_j\) is replaced by an auxiliary variable \(h_{ij}\), which is constrained by the four McCormick inequalities in~\eqref{eq:mccormick}. We then impose the global simplex constraint \(\sum_{i,j} h_{ij} = 1\),
ensuring that the auxiliaries \(\{h_{ij}\}_{i,j}\) define a valid probability distribution. This reformulation linearises the original non-linear inner optimisation. 
Figure~\ref{fig:iaVSmc} illustrates how the McCormick relaxation excludes many of the spurious extreme points admitted by the interval-arithmetic relaxation, thus resulting in less conservative solutions and more effective (whilst still robust)  policies. Furthermore, since each \(h_{ij}\) contributes to exactly four McCormick constraints, 
the total number of constraints grows only polynomially with the marginal supports, yielding a tractable inner linear program.
Full details of this construction and its extension to products of more than two marginal uncertainty sets (obtained by recursive applications) are provided in 
\ifthenelse{\isundefined{\techreport}}{the extended version.}{Appendix~\ref{app:mccormick}.}

\begin{figure}
    \centering
    \includegraphics[width=0.78\linewidth]{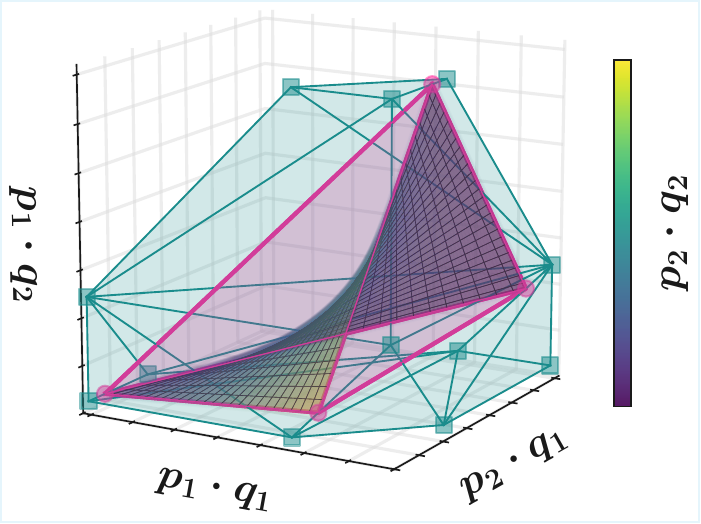}
\caption{Projections of the interval-arithmetic (blue) and McCormick (pink) relaxations for the product of box-type uncertainty sets (coloured curve). The McCormick relaxation is tighter and has fewer spurious extreme distributions.}
    \label{fig:iaVSmc}
\end{figure}

\paragraph{Relaxations for $L_p$ Uncertainty Sets.} 
The constructions above enable the exact composition of polytopic uncertainty sets and provide tight-yet-tractable relaxations for box-type uncertainty sets. We now also consider uncertainty sets that are \(L_p\) norm balls centred at a nominal distribution \(\hat{P} \in \Delta_N\), which are typically estimated from observed data as:
\begin{equation*}
    \mathcal{P}_p(\hat{P}, \varepsilon) = \{P \in \Delta_N \mid \|P - \hat{P}\|_p \leq \varepsilon\}.
\end{equation*}
These sets are generally not polytopic, for $1 < p < \infty$. We hence extend a result from~\citet{Strehl2007}, originally formulated for the composition of \(L_1\) balls, to arbitrary \(L_p\) norms:

\setcounter{theorem}{1}
\begin{theorem}
  \label{thm:radiisum}
  Let \(\mathcal{P}_p(\hat{P}, \varepsilon_1)\) and \(\mathcal{P}_p(\hat{Q}, \varepsilon_2)\) be two \(L_p\) uncertainty sets for some \(1 \le p \le \infty\). Then:
  \[
    \mathcal{P}_p(\hat{P}, \varepsilon_1)\otimes\mathcal{P}_p(\hat{Q}, \varepsilon_2)
    \subseteq \mathcal{P}_p(\hat{P}\otimes\hat{Q}, \varepsilon_1 + \varepsilon_2).
  \]
\end{theorem}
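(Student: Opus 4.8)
The plan is to take an arbitrary element of the product set on the left, write it as an outer product $P \otimes Q$ with $P \in \mathcal{P}_p(\hat P, \varepsilon_1)$ and $Q \in \mathcal{P}_p(\hat Q, \varepsilon_2)$, and bound the $L_p$-distance $\|P \otimes Q - \hat P \otimes \hat Q\|_p$ by $\varepsilon_1 + \varepsilon_2$. First I would use the algebraic identity obtained by adding and subtracting the mixed term,
\begin{equation*}
P \otimes Q - \hat P \otimes \hat Q = (P - \hat P)\otimes Q + \hat P \otimes (Q - \hat Q),
\end{equation*}
which holds by bilinearity of $\otimes$. Applying the triangle inequality for the $L_p$ norm then gives
\begin{equation*}
\|P \otimes Q - \hat P \otimes \hat Q\|_p \le \|(P-\hat P)\otimes Q\|_p + \|\hat P \otimes (Q - \hat Q)\|_p.
\end{equation*}

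The key step is then a multiplicativity property of the $L_p$ norm under Kronecker products: for any vectors $u, v$ one has $\|u \otimes v\|_p = \|u\|_p\,\|v\|_p$, since $\|u\otimes v\|_p^p = \sum_{i,j} |u_i|^p |v_j|^p = \big(\sum_i |u_i|^p\big)\big(\sum_j |v_j|^p\big)$ (and the $p=\infty$ case follows analogously with $\max$ in place of the sum, or by taking limits). Using this on both terms yields
\begin{equation*}
\|P \otimes Q - \hat P \otimes \hat Q\|_p \le \|P - \hat P\|_p\,\|Q\|_p + \|\hat P\|_p\,\|Q - \hat Q\|_p \le \varepsilon_1 \|Q\|_p + \varepsilon_2 \|\hat P\|_p.
\end{equation*}

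It remains to discharge the factors $\|Q\|_p$ and $\|\hat P\|_p$. Since $Q$ and $\hat P$ are probability distributions, every component lies in $[0,1]$, so $|q_j|^p \le q_j$ and hence $\|Q\|_p^p \le \sum_j q_j = 1$, i.e.\ $\|Q\|_p \le 1$; likewise $\|\hat P\|_p \le 1$ (and for $p = \infty$ the bound $\|Q\|_\infty \le 1$ is immediate). This gives $\|P \otimes Q - \hat P \otimes \hat Q\|_p \le \varepsilon_1 + \varepsilon_2$, so $P \otimes Q \in \mathcal{P}_p(\hat P \otimes \hat Q, \varepsilon_1 + \varepsilon_2)$, and since the element of the product set was arbitrary, the claimed inclusion follows. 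The only mild subtlety — and the one place to be careful — is the uniform treatment of the boundary case $p = \infty$ in the norm-multiplicativity and the bound-on-distributions steps; everything else is a short bilinearity-plus-triangle-inequality argument, so I expect no real obstacle.
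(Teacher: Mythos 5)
Your proposal is correct and follows essentially the same route as the paper's proof: the identity $P\otimes Q-\hat P\otimes\hat Q=(P-\hat P)\otimes Q+\hat P\otimes(Q-\hat Q)$ is exactly the paper's add-and-subtract decomposition, the triangle inequality is the paper's appeal to Minkowski, and your multiplicativity step $\|u\otimes v\|_p=\|u\|_p\|v\|_p$ together with $\|Q\|_p\le 1$ (via $|q_j|^p\le q_j$) is precisely the computation the paper carries out term by term. The only cosmetic difference is that you make the $p=\infty$ case explicit, which the paper leaves implicit.
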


\ifthenelse{\isundefined{\techreport}}{}{We provide the proof in Appendix~\ref{app:proofs}. }
This result offers an approach to solving non-polytopic rf-MDPs, complementing the constructions presented in  Section~\ref{sec:relaxations}. When applied to $L_1$ uncertainty sets, it directly extends the PAC analysis of~\citet{Strehl2007} to robust policy synthesis. In Section~\ref{sec:experiments}, we compare the various relaxations, showing that our constructions yield substantially tighter uncertainty sets, enable more sample-efficient learning, and deliver robust policies with stronger performance guarantees.

\section{Robust Policy Learning in Factored MDPs}
\label{sec:learning}

We now introduce a novel learning approach that integrates factored model estimation with accurate and tractable robust planning, generating policies that are provably robust for unknown f-MDPs. Based on agent interactions with the environment, we derive marginal uncertainty sets, such as confidence intervals or $L_1$ balls, which induce a polytopic rf-MDP. Leveraging the solution methods in the previous section, we exploit this factored structure to achieve dimensional gains in sample efficiency compared to existing robust learning methods in flat models, as we demonstrate in our experimental evaluation. Crucially, our approach provides a finite-sample, anytime PAC guarantee: after any number of interactions, we can bound the worst-case performance in the unknown MDP with high confidence.

We consider a factored MDP \(M\) 
with known state space but 
unknown (marginal) transition distributions.  For clarity, we assume that the reward function is known, but all results extend to the case of unknown reward functions~\citep{DBLP:conf/icml/StrehlL05}.  Our algorithm has access to agent-environment interactions in the form of a dataset of transition samples \(\mathcal{C} = \{(s_t, a_t, s_t')\}_t\), where \(a_t\) is the action taken in state \(s_t\) under some exploration policy and \(s'_t\) is the observed successor state.  
We remain agnostic to the precise sampling mechanism and assume that the sample set \(\mathcal{C}\) is given. In Section~\ref{sec:experiments}, we describe the specific sampling procedure used in our evaluation.

From the definition of a factored MDP, we first identify the relevant transition components that must be estimated.  For a state–action pair \((s,a)\), the relevant dependencies are
\begin{equation*}
    D_{s,a}
=\bigl\{j\in \mathcal{I}\;\big|\;\exists i. \; j=D_i(s,a)\bigr\}.
\end{equation*}
Aggregating over all state–action pairs yields the set of relevant transition components:
\(
    \mathcal{Q} \;=\; \bigcup_{(s,a)\in S\times A} D_{s,a},
\)
so that \(\lvert\mathcal{Q}\rvert\) counts the number of marginal transition distributions to be estimated.  The total number of unknown transition probabilities is the sum of the supports of the marginals:
\(
    U \;=\; \sum_{j\in\mathcal{Q}}
\bigl\lvert\mathrm{supp}\bigl(P(\,\cdot \,|\, j)\bigr)\bigr\rvert.
\)
For a sample dataset \(\mathcal{C} = \{(s_t, a_t, s_t')\}_t\), we define the \emph{realisation counts}:
\[
n(x_i, j)
= \sum_{(s,a,s') \in \mathcal{C}}
  \mathds{1}\bigl(D_i(s,a)=j \;\land\; s_i'=x_i\bigr), 
\]
and the \emph{component counts}:
\[
n(j)
= \sum_{(s,a,s') \in \mathcal{C}}
  \sum_{i}
  \mathds{1}\bigl(D_i(s,a)=j\bigr),
\]
for \(x_i\in\mathcal{D}_i\) and \(j\in\mathcal{I}\).
Here, \(n(j)\) is the total number of encountered transitions whose transition probability distribution involves a marginal with dependency identifier \(j\), while \(n(x_i,j)\) records how often such transitions lead to the marginal state component \(x_i\). From this we can derive the empirical estimates of the marginal distributions as
\begin{equation}
    \label{eq:pointestimate}
    \hat{P}(s_i' | D_i(s,a)) = \frac{n(s_i' , D_i(s,a))}{n(D_i(s,a))}.
\end{equation}
While this empirical estimate becomes increasingly accurate with more data, it provides no quantification of uncertainty.  We aim to synthesise a policy that, after any fixed number of samples, comes with a guaranteed lower bound on its performance in the unknown f-MDP.  To achieve this, we inflate each point estimate into a high‐confidence uncertainty set over the marginal distribution, thereby defining an rf‐MDP.  

\subsection{Uncertainty Set Construction}
We consider two established methods for constructing uncertainty sets.  The first builds exact binomial confidence intervals for each transition probability, treating each outcome \(s_i'\) under dependency \(j = D_i(s,a)\) as a Bernoulli trial~\citep{DBLP:conf/nips/SuilenS0022,DBLP:conf/qestformats/MeggendorferWW25}.  Given \(x = n(s_i',j)\) “successes” in \(n = n(j)\) trials and an error probability \(\delta \in (0,1)\), the true transition probability \(P(s_i'\mid j)\) lies in the interval:
\begin{equation*}
\text{CP}(s_i',j) = \bigl[B\bigl(\tfrac\delta2; x,\,n-x+1\bigr),\;B\bigl(1-\tfrac\delta2; x+1,\,n-x\bigr)\bigr]
\end{equation*}
with probability at least \(1-\delta\), where \(B(\alpha;u,v)\) denotes the \(\alpha\)\nobreakdash-quantile of the \(\mathrm{Beta}(u,v)\) distribution~\citep{ClopperPearson1934}.  Applying these bounds independently to each transition component defines the box‐type uncertainty sets
\[
\mathcal{P}(j)
=\Bigl\{\,P'\in\Delta(\mathcal{D}_i)\;\Big|\;
P'(s_i')\in\text{CP}(s_i',j)\;\forall s_i'\Bigr\},
\]
to which our rf‐MDP solution techniques apply directly. Throughout, we assume $n(j) > 0$. When $n(j) = 0$, we set the uncertainty sets as the entire probability simplex.

The second approach centres on an \(L_{1}\)‐norm ball around the empirical marginal distribution \(\hat{P}(\cdot\,|\, j)\).  For each relevant dependency identifier \(j = D_i(s,a) \in \mathcal{Q}\) , we set
\[
\mathcal{P}(j)
=\Bigl\{\,P'\in\Delta(\mathcal{D}_i)\;\Big|\;
\|P'(\cdot) - \hat{P}(\,\cdot\,|\, j)\|_{1} \le \varepsilon\Bigr\},
\]
where $\varepsilon$ follows from~\citet{WeissmanEtAl2003} as
\[
\varepsilon = \sqrt{\frac{2\bigl[\ln(2^{a}-2)-\ln(\delta)\bigr]}{n(j)}},
\quad
a = \bigl\lvert\mathrm{supp}(P(\cdot\mid j))\bigr\rvert.
\]
This ensures that the true marginal lies in \(\mathcal{P}(j)\) with probability at least \(1 - \delta\). This underpins the native PAC-learning results for both factored and standard MDPs~\citep{DBLP:conf/icml/StrehlL05,Strehl2007}.
Moreover, it yields polytopic uncertainty sets, as the intersection of an $L_1$ ball with the probability simplex is a polytope, thus permitting exact composition via Theorem~\ref{thm:vertex}.
However, \(L_{1}\) balls do not integrate naturally into the McCormick relaxation without further overapproximating them as boxes. As we show in
\ifthenelse{\isundefined{\techreport}}{the extended version}{Appendix~\ref{app:l1box} }, overapproximating \(L_1\) balls by their smallest enclosing box yields a looser uncertainty set than applying the box-type construction directly. Consequently, the radius‐sum result of Theorem~\ref{thm:radiisum} is the natural choice when composing $L_1$ marginal sets with a large number of vertices. 

\begin{table*}[!htbp]
\centering
\resizebox{.99\linewidth}{!}{%
  \begin{tabular}{c c c | c c | c c c | c c c}
    \toprule
    \multirow{2}{*}{\textbf{Domain}} & \multirow{2}{*}{$\bm{|S|}$} & \multirow{2}{*}{$\bm{|T|}$}
      & \multicolumn{2}{c}{\makecell{\textbf{Vertex Enumeration}}}
      & \multicolumn{3}{c}{\makecell{\textbf{Interval-Arithmetic}}}
      & \multicolumn{3}{c}{\makecell{\textbf{McCormick}}} \\
    \cmidrule(lr){4-5} \cmidrule(lr){6-8} \cmidrule(lr){9-11}
    & & 
      & \makecell{\textbf{Robust Value}}
      & \makecell{\textbf{Time [s]}}
      & \makecell{\textbf{Robust Value}}
      & \makecell{\textbf{Rel. Gap}}
      & \makecell{\textbf{Time [s]}}
      & \makecell{\textbf{Robust Value}}
      & \makecell{\textbf{Rel. Gap}}
      & \makecell{\textbf{Time [s]}} \\
    \midrule
    \text{Aircraft ($\uparrow$)} & $11153$ & $1262099$
      & $0.73$ & $2535.8$ 
      & $0.65$ & $11\%$ & $6.1$
      & $0.73$ & $0\%$ & $43.7$ \\
    \text{Drone ($\uparrow$)} & $262144$ & $21694720$
      & $0.69$ & $2125.8$
      & $0.63$ & $10\%$ & $90.2$
      & $0.69$ & $0\%$ & $190.7$ \\
    \text{Stock Trading ($\uparrow$)} & $12481$ & $5362624$
      & $25.43$ & $67.6$
      & $17.60$ & $31\%$ & $16.0$
      & $25.43$ & $0\%$ & $67.5$ \\
    \text{SysAdmin ($\uparrow$)} & $15873$ & $9332587$
      & $50.70$ & $66.7$
      & $46.66$ & $8\%$ & $34.1$
      & $50.70$ & $0\%$ & $64.1$ \\
    \text{Chain ($\downarrow$)} & $100$ & $3136$
      & $331.34$ & $778.1$
      & $451.28$ & $36\%$ & $0.6$
      & $331.34$ & $0\%$ & $7.6$ \\
    \text{Frozen Lake ($\downarrow$)} & $50625$ & $1866556$
      & $216.01$ & $1018.4$
      & $242.05$ & $12\%$ & $67.7$
      & $216.01$ & $0\%$ & $105.9$ \\
    \text{Herman ($\downarrow$)} & $2048$ & $177148$
      & $20.64$ & $11.0$
      & $23.82$ & $15\%$ & $2.8$
      & $20.64$ & $0\%$ & $8.1$ \\
    \bottomrule
  \end{tabular}%
}
\caption{Results for solving rf-MDPs. Arrows ($\uparrow$/$\downarrow$) indicate optimisation directions. $|S|$ and $|T|$ denote the number of states and transitions. The relative gap is $|V_{\mathrm{VE}} - V_{R}|/V_{R}$, where $V_{\mathrm{VE}}$ and $V_{R}$ are the robust results from vertex enumeration and respective relaxation. The complete set of experiments, with more results for varying uncertainty radii, are in Table~\ref{tab:case_study_extended} of Appendix~\ref{app:experiments}.}
\label{tab:case_study}
\end{table*}

\captionsetup[sub]{justification=centering}
\begin{figure*}[!h]
  \centering
  \begin{subfigure}[b]{\textwidth}
    \centering
    \includegraphics[width=0.8\textwidth]{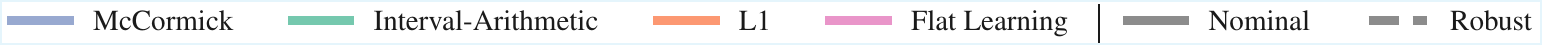 }
  \end{subfigure}
  \\ 

  \begin{subfigure}[b]{0.25\textwidth}
    \centering
    \includegraphics[width=\textwidth]{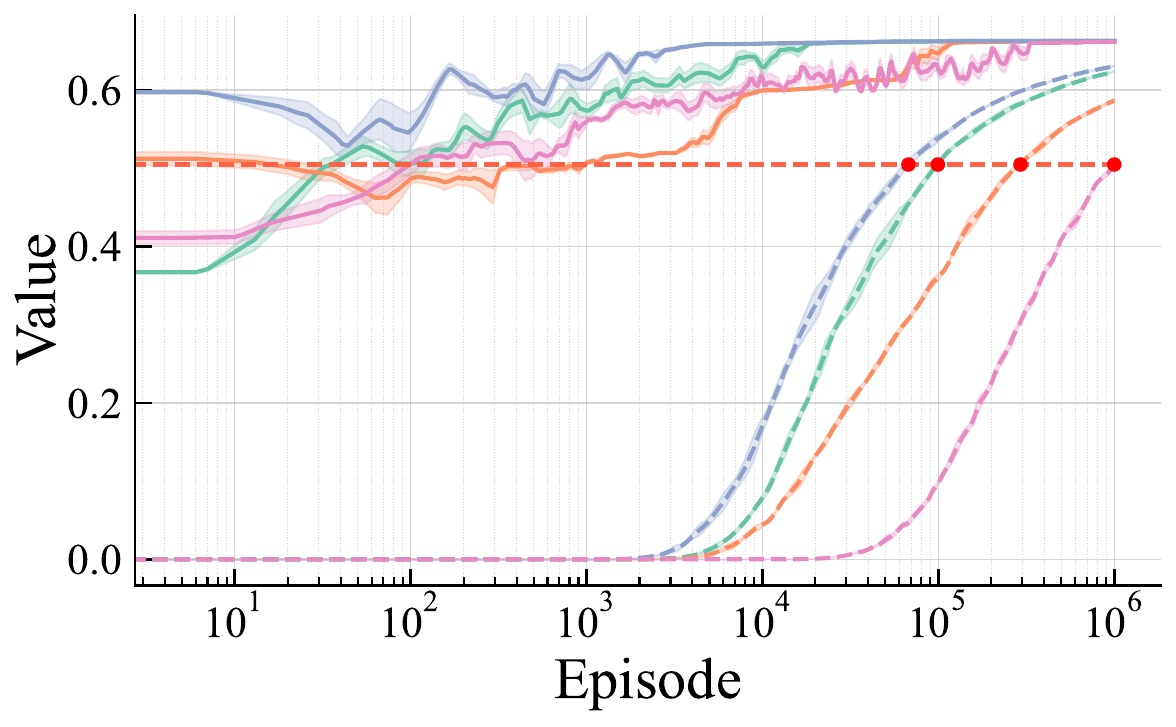}
    \caption{Aircraft}
    \label{fig:sub1}
  \end{subfigure}\hfill
  \begin{subfigure}[b]{0.25\textwidth}
    \centering
    \includegraphics[width=\textwidth]{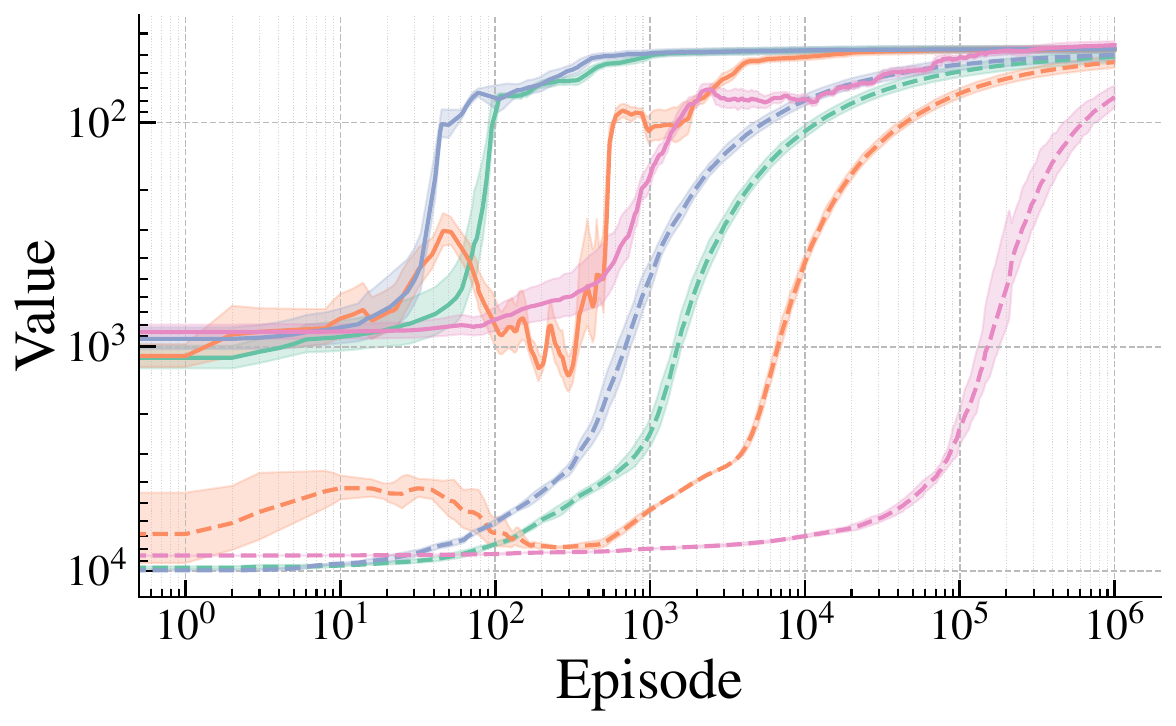}
    \caption{Frozen Lake}
    \label{fig:sub2}
  \end{subfigure}\hfill
  \begin{subfigure}[b]{0.25\textwidth}
    \centering
    \includegraphics[width=\textwidth]{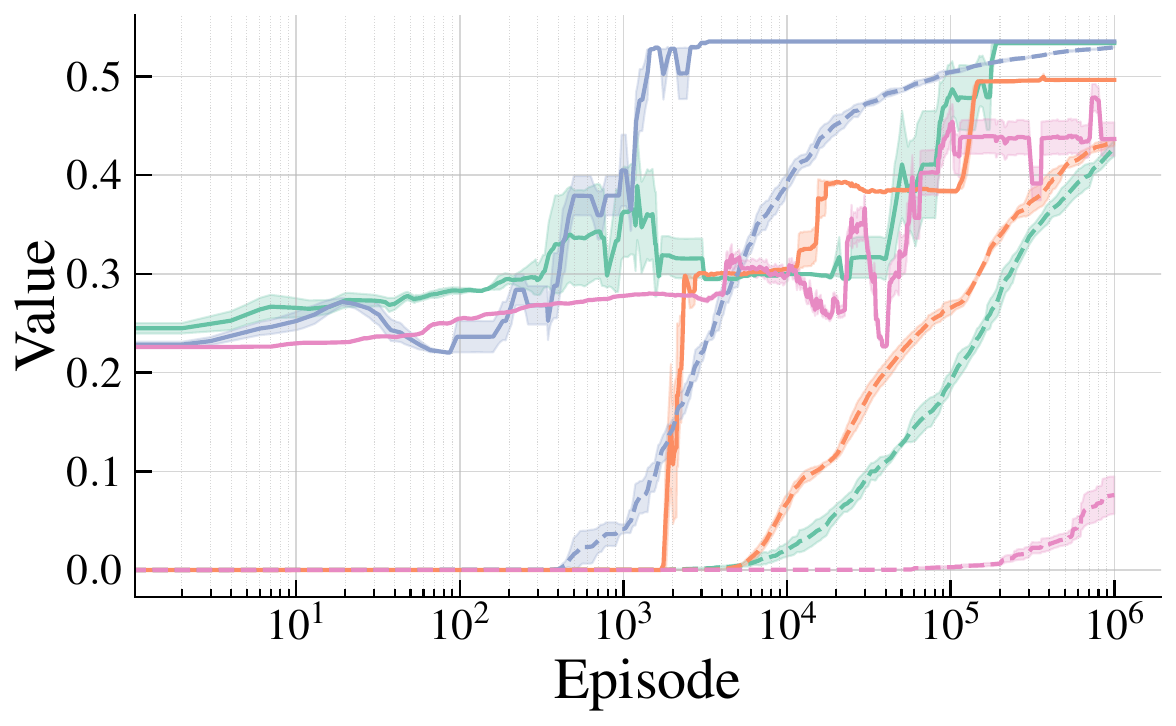}
    \caption{Drone}
    \label{fig:sub3}
  \end{subfigure}\hfill
  \begin{subfigure}[b]{0.25\textwidth}
    \centering
    \includegraphics[width=\textwidth]{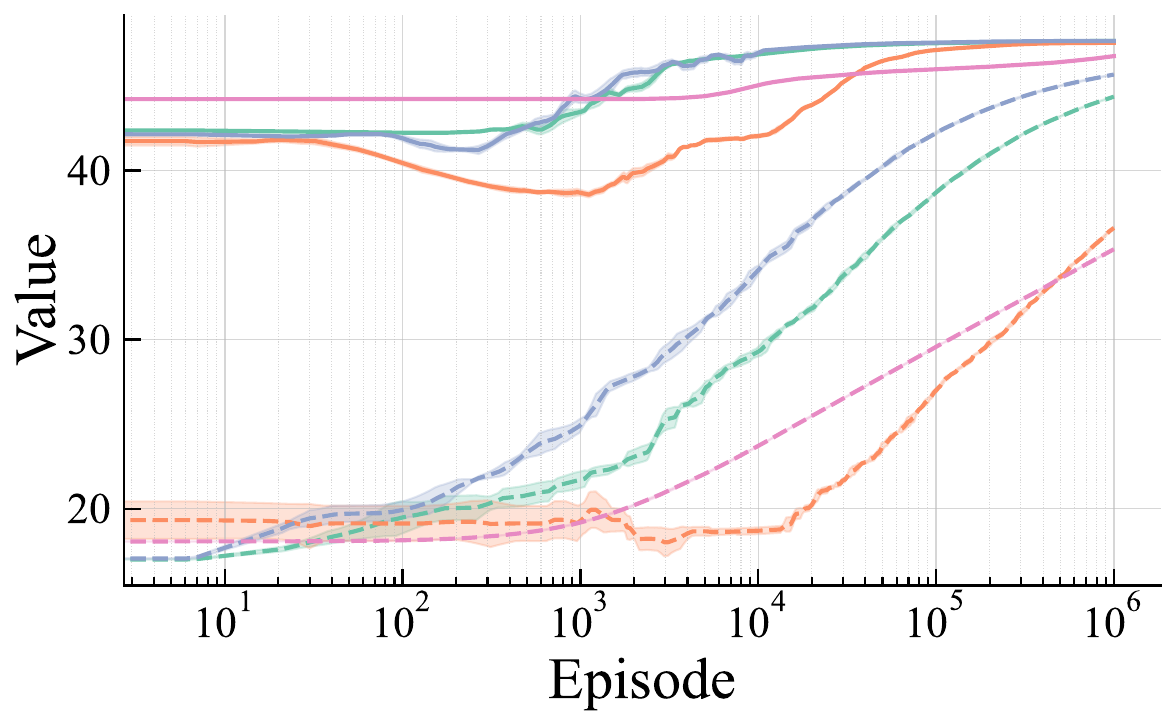}
    \caption{SysAdmin}
    \label{fig:sub4}
  \end{subfigure}

\caption{Results for robust policy learning.
The plots show objective value against processed fixed-length trajectories. Dashed curves show the robust guarantee for the learned robust policy, solid curves show its actual performance on the true model. The complete experimental results, including additional domains and total runtimes, are provided in Figure~\ref{fig:learning_extended} of Appendix~\ref{app:experiments}.}
  \label{fig:learningresults}
  \vspace*{-1em}
\end{figure*}

\subsection{Provably Robust Policy Synthesis}
To obtain a provably robust policy with quantifiable performance guarantees in the unknown f-MDP \(M\), we construct an rf-MDP \(\tilde{M}\) using the uncertainty sets described above. For the guarantees to be meaningful, we must ensure that the unknown MDP \(M\) is \emph{contained} in \(\tilde{M}\) (denoted \(M \in \tilde{M}\)) with high, user-specified confidence. This means that every marginal distribution \(P(\,\cdot \, | j)\) for \(j \in \mathcal{Q}\) must lie within its corresponding uncertainty set \(\mathcal{P}(j)\).

Given a desired overall confidence probability \(1 - \beta\), we follow the standard approach of~\citet{Strehl2007} and distribute the total error probability \(\beta \in (0,1)\) across all learnt distributions/transitions. Under the \(L_\infty\) scheme, this results in \(\delta = \beta / U\), and under the \(L_1\) scheme, in \(\delta = \beta / \lvert \mathcal{Q} \rvert\). By the union bound, this ensures that \(M \in \tilde{M}\) with probability at least \(1 - \beta\), regardless of the number of observed samples.

When solving the learned rf-MDP $\tilde{M}$ using a \emph{robust}, i.e., either exact or relaxation-based method from Section~\ref{sec:solving}, the following performance guarantee for the resulting robust policy on the true, unknown f-MDP $M$ follows immediately:

\begin{theorem}
    Let $M$ be an f-MDP and $\tilde{M}$ an rf-MDP such that $\Pr[M \in \tilde{M}] \geq 1 - \beta$ for some $\beta > 0$. Let $\pi^*$ be the policy obtained by solving $\tilde{M}$ with a robust solution method, and let $V^{\pi^*}_{\tilde{M}}(s)$ denote its corresponding robust value. Then,
    \begin{equation}
    \label{eq:pacguarantee}
    \Pr\Bigl[V^{\pi^*}_M(s) \;\ge\; V^{\pi^*}_{\tilde{M}}(s)\Bigr] \;\ge\; 1 - \beta.
    \end{equation}
\end{theorem}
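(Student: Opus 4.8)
The plan is to show that the event $\{M \in \tilde{M}\}$ is contained in the event $\{V^{\pi^*}_M(s) \ge V^{\pi^*}_{\tilde{M}}(s)\}$; the claim then follows at once by monotonicity of probability together with the hypothesis $\Pr[M \in \tilde{M}] \ge 1-\beta$. Note that $\pi^*$, $\tilde{M}$, and the reported robust value $V^{\pi^*}_{\tilde{M}}(s)$ are all functions of the (random) sample set $\mathcal{C}$, while $M$ is fixed; it therefore suffices to establish the inclusion pointwise, i.e., to fix an arbitrary realisation of $\mathcal{C}$ for which $M \in \tilde{M}$ holds and derive the deterministic inequality $V^{\pi^*}_M(s) \ge V^{\pi^*}_{\tilde{M}}(s)$.

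First I would unpack $M \in \tilde{M}$: by construction it means that for every $(s,a)$ and every factor $X_i$ the true marginal $P(\,\cdot \mid D(s,a,X_i))$ lies in the uncertainty set $\mathcal{P}(D(s,a,X_i))$, so by Eq.~\eqref{eq:transunc} the true kernel $T(\,\cdot \mid s,a) = \prod_{i=1}^{n} P(\,\cdot \mid D(s,a,X_i))$ lies in $\mathcal{T}(s,a)$. By rectangularity, this family of per-state-action kernels is realised by a single stationary environment policy $\tau_M$ in $\tilde{M}$ — namely the one that at each $(s,a)$ selects, for each factor, the true marginal. Consequently the f-MDP $M$ under agent policy $\pi^*$ and the rf-MDP $\tilde{M}$ under the policy pair $(\pi^*,\tau_M)$ induce the same probability measure over paths from $s$, and hence (for any objective $R$) $V^{\pi^*}_M(s) = V^{\pi^*,\tau_M}_{\tilde{M}}(s)$.

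Next I would chain inequalities. Since $\tau_M$ is one admissible environment policy, $V^{\pi^*,\tau_M}_{\tilde{M}}(s) \ge \inf_\tau V^{\pi^*,\tau}_{\tilde{M}}(s)$, the exact robust value of $\pi^*$ in $\tilde{M}$. If $\pi^*$ and $V^{\pi^*}_{\tilde{M}}(s)$ were produced by the exact method of Theorem~\ref{thm:vertex}, the reported value equals this infimum and we are done. If a relaxation from Section~\ref{sec:relaxations} was used instead, the reported value is computed over overapproximations $\mathcal{R}(s,a) \supseteq \mathcal{T}(s,a)$; since the true $\tau_M$ still lies in these enlarged sets and an infimum over a larger feasible set can only be smaller, the reported value is a lower bound on $\inf_\tau V^{\pi^*,\tau}_{\tilde{M}}(s)$. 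Either way, $V^{\pi^*}_M(s) = V^{\pi^*,\tau_M}_{\tilde{M}}(s) \ge \inf_\tau V^{\pi^*,\tau}_{\tilde{M}}(s) \ge V^{\pi^*}_{\tilde{M}}(s)$, the desired pointwise inequality.

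I do not anticipate a serious obstacle: the result is essentially the union of the above inclusion with the probability hypothesis. The one step meriting care is the identification of the true transition function with a valid stationary environment policy $\tau_M$ and the accompanying path-measure equality $V^{\pi^*}_M(s) = V^{\pi^*,\tau_M}_{\tilde{M}}(s)$ — this relies on rectangularity (the same assumption underpinning the robust Bellman equation~\eqref{eq:bellman}) and on the fact that $M$ shares the factored dependency structure of $\tilde{M}$, so that $\tau_M$ genuinely lies in the product uncertainty sets $\mathcal{T}(s,a)$ rather than merely in their convex hull or some relaxation. I would also make explicit that worst-case optimality of $\pi^*$ is not needed for the bound: any policy returned by a sound robust method, paired with its reported (possibly relaxed) robust value, satisfies~\eqref{eq:pacguarantee}.
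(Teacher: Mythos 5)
Your proposal is correct and follows exactly the reasoning the paper relies on: the paper gives no separate proof for this theorem, asserting it ``follows immediately'' from the two facts you make explicit --- that on the event $M \in \tilde{M}$ the true kernel is realised by an admissible environment policy (so $V^{\pi^*}_M(s) \ge \inf_\tau V^{\pi^*,\tau}_{\tilde{M}}(s)$, cf.\ the remark after the definition of robust values), and that relaxation-based solvers only enlarge the uncertainty sets and hence report a lower bound on that infimum (cf.\ Section~\ref{sec:relaxations}). Your write-up is a faithful, slightly more careful elaboration of the same argument, with no gaps.
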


In other words, with probability at least \(1 - \beta\), the learned robust policy \(\pi^*\) achieves a value in every state of the true f-MDP that is no worse than its computed value in the learned rf-MDP. This PAC-style guarantee based on the novel robust solution methods distinguishes our approach from prior methods~\citep{DBLP:journals/ai/DelgadoSB11,journals/corr/abs-2404-02006}, which cannot guarantee a valid lower bound, thus forfeiting such a performance guarantee.

\section{Experiments}
\label{sec:experiments}

We integrated our methods into the PRISM solver for probabilistic models~\cite{DBLP:conf/cav/KwiatkowskaNP11}, which offers a modular language for specifying factored MDPs.  We augment PRISM with our algorithms for solving and learning robust factored MDPs and employ the Gurobi optimiser with default parameters for all linear programs.

\subsection{Evaluation: Solving rf-MDPs}

We evaluate the three methods for \emph{solving} rf-MDPs with box-type uncertainty sets: vertex enumeration, interval-arithmetic relaxations, and McCormick relaxations, across a range of benchmark environments.  These include classic f-MDP domains such as the System Administrator domain discussed in Example \ref{ex:1}~\cite{DBLP:conf/icml/GuestrinPS02}, 
as well as established r-MDP case studies with inherent factored structure, including multi-agent scenarios like the Aircraft Collision Avoidance domain~\cite{kochenderfer2015,DBLP:conf/tacas/SchnitzerAP25}.  Detailed descriptions of each domain
are provided in \ifthenelse{\isundefined{\techreport}}{the extended version.}{Appendix~\ref{app:benchmarks}.}
For each domain, we obtain an rf-MDP by perturbing a nominal transition kernel with an $L_{\infty}$ uncertainty radius of $0.025$ (see 
\ifthenelse{\isundefined{\techreport}}{the extended version}{Appendix~\ref{app:experiments}}
for additional levels of uncertainty), yielding box‐type uncertainty sets for each factor.

\subsubsection{Results.}
Table~\ref{tab:case_study} summarises the outcomes.
For each method, we report: (i) the robust value of the optimal policy in the rf-MDP; 
(ii) the runtime to solve the rf-MDP; and (iii) for relaxation-based methods, the relative gap to the exact result obtained by vertex enumeration, quantifying the additional conservatism introduced by 
over-approximating the product uncertainty sets.

Notably, McCormick relaxations preserve the tightness of vertex enumeration while remaining computationally efficient. Interval-arithmetic relaxations, though generally fast, yield looser bounds due to spurious extreme distributions. Overall, McCormick relaxations strike the best balance between solution tightness and runtime. We present the complete set of experiments, including analyses across varying uncertainty radii in \ifthenelse{\isundefined{\techreport}}{the extended version}{Appendix~\ref{app:experiments}}. 

\subsection{Evaluation: Robust Policy Learning in f-MDP}

We next compare four methods for robust policy \emph{learning}: (i) standard r-MDP learning in the flat model with box-type uncertainty sets; (ii) rf-MDP learning with $L_1$ uncertainty sets solved using the radius-sum result from Theorem~\ref{thm:radiisum}, which is the direct extension of the PAC analysis of~\citet{Strehl2007} to robust policy learning and represents the only available baseline for rf-MDPs; (iii) \& (iv) rf-MDP learning with box-type marginal uncertainty sets solved via either interval-arithmetic or McCormick relaxation.

To build the transition dataset $\mathcal{C}$, we iteratively sample fixed-length trajectories that restart in the initial state. 
To balance exploration and exploitation, we follow the \emph{optimism in the face of uncertainty} principle~\cite{DBLP:journals/ftml/Munos14}, selecting actions that are optimal under the most favourable transition model within the current uncertainty sets. 
Note that this choice of sampling procedure is arbitrary: the resulting robustness guarantees hold under any alternative sampling strategy, such as random action selection.

Across all domains, we fix the overall confidence level for the inclusion of the true, unknown MDP in the learned r-MDP to $1-\beta = 0.9999$, (see Equation~\eqref{eq:pacguarantee}). Each experiment is repeated with 10 distinct random seeds, and we report the average results along with standard deviation bands.

\subsubsection{Results.}
Figure~\ref{fig:learningresults} presents robust policy learning results across various domains. For each method, we plot the robust value of the learned  policy (dashed lines) and its nominal performance on the true, hidden model (solid lines) against the number of processed trajectories. While true-model performance provides useful validation, our focus lies on the robust values, i.e., the performance that can be guaranteed with high confidence on the unknown environment. 

The results demonstrate significant gains in sample efficiency by exploiting factored structures. Specifically, far fewer fixed-length trajectory samples are required to achieve equivalent robust performance guarantees compared to state-of-the-art methods on flat models. 
Furthermore, rf-MDP learning with box-type uncertainty sets, derived from exact confidence intervals and solved via convex relaxations, consistently outperforms approaches based on $L_1$ uncertainty sets and the radius-sum method. McCormick relaxations need about half the number of samples of interval-arithmetic relaxations for the same robust guarantees. This advantage is particularly crucial in domains where data collection is inherently limited, costly, or challenging.

Figure~\ref{fig:learningresults}a (red line) shows the number of samples needed to match the performance guarantee from flat learning on the Aircraft domain after $10^6$ trajectories. Factored learning with $L_1$ uncertainty sets reduces this to $3\cdot10^5$. Interval-arithmetic relaxation further decreases it to $10^5$, and McCormick relaxation is the most efficient, requiring only $6\cdot10^4$ trajectories. This gap becomes even more pronounced in other domains. 
We provide the full set of experiments including additional domains, total runtimes and detailed comparisons of sample efficiency in the Appendix~\ref{app:experiments}.

\section{Conclusion}

We have presented novel methods for solving robust factored MDPs, facilitating exact solutions and optimal robust policies for polytopic uncertainty sets. Utilising global optimisation techniques, we developed relaxation-based approaches that balance accuracy and computational tractability.
Our experimental results show that these methods markedly improve accuracy in solving rf-MDPs and enable significantly more sample-efficient robust policy learning.

\section*{Acknowledgements}
This work was partially supported by the ARIA projects \textsc{SAINT} and \textsc{Super Martingale Certificates}, the UKRI AI Hub on Mathematical Foundations of AI, and the European Research Council (ERC) under the European Union’s Horizon 2020 research and innovation programme (grant agreement No.~834115, \textsc{FUN2MODEL}). The authors are grateful to Karan Mukhi for the insightful discussions on this work.

\bibliography{aaai2026}

\clearpage
\appendix

\section{Proofs}
\label{app:proofs}
\subsection{Proof of Theorem 1}
\setcounter{theorem}{0}

\begin{theorem}[Restated]
  Let 
  $\mathcal{P} = \mathrm{conv}\{P^{(1)}, \dots, P^{(m)}\} \subseteq \Delta_M$ 
  and 
  $\mathcal{Q} = \mathrm{conv}\{Q^{(1)}, \dots, Q^{(k)}\} \subseteq \Delta_N$
  be polytopic marginal uncertainty sets. Then the corresponding inner optimisation problem in Equation~\eqref{eq:bellman} attains its optimum at one of the pairwise products of the extreme distributions:
  \[
    \left\{\, P^{(i)} \otimes Q^{(j)} \;\middle|\; 1 \le i \le m,\; 1 \le j \le k \,\right\}.
  \]
\end{theorem}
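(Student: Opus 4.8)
The plan is to reduce the statement to a standard fact about linear programming over polytopes, using two ingredients already highlighted before the theorem: the inner objective is affine in the transition kernel, and the Kronecker product is bilinear. First I would rewrite the inner optimisation as $\min_{T}f(T)$, where $f(T)=r(s,a)+\gamma\sum_{s'\in S}T(s')\,V^{*}_{\tilde{M}}(s')$, regarding each product distribution $T\in\mathcal{T}(s,a)=\mathcal{P}\otimes\mathcal{Q}$ as a vector in $\mathbb{R}^{M\cdot N}$ under the double-index convention of Equation~\eqref{eq:productdists}. Then $f$ is affine in $T$, and $\mathcal{T}(s,a)$ is compact, being the image of the compact set $\mathcal{P}\times\mathcal{Q}$ under the continuous map $(P,Q)\mapsto P\otimes Q$; hence the minimum is attained.

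The key geometric claim is that $\mathrm{conv}(\mathcal{P}\otimes\mathcal{Q})=\mathrm{conv}\{\,P^{(i)}\otimes Q^{(j)}\mid 1\le i\le m,\ 1\le j\le k\,\}$. The inclusion $\supseteq$ is immediate, since each $P^{(i)}\otimes Q^{(j)}$ lies in $\mathcal{P}\otimes\mathcal{Q}$. For $\subseteq$, take any $P\otimes Q$ with $P=\sum_i\lambda_i P^{(i)}$ and $Q=\sum_j\mu_j Q^{(j)}$ for convex coefficients $\lambda,\mu$; bilinearity of $\otimes$ gives $P\otimes Q=\sum_{i,j}\lambda_i\mu_j\,(P^{(i)}\otimes Q^{(j)})$, and since $\sum_{i,j}\lambda_i\mu_j=(\sum_i\lambda_i)(\sum_j\mu_j)=1$ with all $\lambda_i\mu_j\ge 0$, this writes $P\otimes Q$ as a convex combination of the product vertices. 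Taking convex hulls of both sides yields the claim, so $\mathrm{conv}(\mathcal{P}\otimes\mathcal{Q})$ is a polytope whose vertex set is a subset of $\{P^{(i)}\otimes Q^{(j)}\}$.

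Finally I would tie the pieces together. Because $f$ is affine, $\min_{T\in\mathcal{P}\otimes\mathcal{Q}}f(T)=\min_{T\in\mathrm{conv}(\mathcal{P}\otimes\mathcal{Q})}f(T)$: the $\ge$ direction is trivial (a minimum over a superset is no larger), and the $\le$ direction holds because the minimum of the affine $f$ over the polytope $\mathrm{conv}(\mathcal{P}\otimes\mathcal{Q})$ is attained at one of its vertices, which by the claim is some $P^{(i)}\otimes Q^{(j)}$ — a point that already lies in $\mathcal{P}\otimes\mathcal{Q}$. Hence the inner optimum is attained at a pairwise product of extreme distributions. For $n$ marginals, the same coordinatewise expansion shows any $P_1\otimes\cdots\otimes P_n$ is a convex combination of the products $P_1^{(i_1)}\otimes\cdots\otimes P_n^{(i_n)}$, and the identical argument applies (equivalently, one inducts on the number of factors). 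I expect the only point requiring care to be exactly this last equality: minimising the affine objective over the \emph{non-convex} set $\mathcal{P}\otimes\mathcal{Q}$ cannot beat minimising over its convex hull, and this hinges precisely on the fact that every vertex of that hull already belongs to $\mathcal{P}\otimes\mathcal{Q}$; the rest is routine convexity and bilinearity.
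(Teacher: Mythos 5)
Your proof is correct and follows essentially the same route as the paper's: bilinearity of $\otimes$ shows $\mathrm{conv}(\mathcal{P}\otimes\mathcal{Q})=\mathrm{conv}\{P^{(i)}\otimes Q^{(j)}\}$, and linearity of the inner objective then places the optimum at one of these product vertices. If anything, you are slightly more careful than the paper in spelling out why minimising over the non-convex set $\mathcal{P}\otimes\mathcal{Q}$ coincides with minimising over its convex hull (namely, that the hull's vertices already lie in the original set), a step the paper's proof passes over implicitly.
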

\begin{proof}

We define the set
\[
\mathcal{V} = \{P^{(i)} \otimes Q^{(j)} \mid 1 \leq i \leq m,\ 1 \leq j \leq k\},
\]
as the set of product vertices of the two polytopic marginal uncertainty sets, and let $\mathcal{S} = \mathcal{P} \otimes \mathcal{Q}$ denote their true product.
We first show that $\mathrm{conv}(\mathcal{S}) \subseteq \mathrm{conv}(\mathcal{V})$.

Let $P \in \mathcal{P}$ and $Q \in \mathcal{Q}$ be arbitrary. Since $\mathcal{P}$ and $\mathcal{Q}$ are convex hulls of their respective vertices, we can write:
\begin{align*}
    &P = \sum_{i=1}^{m} \lambda_i P^{(i)}, \quad \text{where } \lambda_i \ge 0,\ \sum_{i=1}^{m} \lambda_i = 1, \text{and}\\
&Q = \sum_{j=1}^{k} \theta_j Q^{(j)}, \quad \text{where } \theta_j \ge 0,\ \sum_{j=1}^{k} \theta_j = 1.
\end{align*}

By bilinearity of the Kronecker product $\otimes$~\citep{DBLP:books/sp/HorstT96,DBLP:conf/jmda/PadrolPfeifle10}, we have
\begin{align*}
P \otimes Q
  &= \Bigl( \sum_{i=1}^{m} \lambda_i P^{(i)} \Bigr)
     \otimes
     \Bigl( \sum_{j=1}^{k} \theta_j Q^{(j)} \Bigr) \\
  &= \sum_{i=1}^{m} \sum_{j=1}^{k}
     \lambda_i \theta_j \bigl( P^{(i)} \otimes Q^{(j)} \bigr).
\end{align*}
Thus, $P \otimes Q$ is a convex combination of elements in $\mathcal{V}$, so $P \otimes Q \in \mathrm{conv}(\mathcal{V})$. Since $P$ and $Q$ were arbitrary, we conclude:
\[
\mathcal{S} = \{P \otimes Q \mid P \in \mathcal{P},\ Q \in \mathcal{Q} \} \subseteq \mathrm{conv}(\mathcal{V}).
\]
 This implies that
\[
\mathrm{conv}(\mathcal{S}) \subseteq \mathrm{conv}(\mathrm{conv}(\mathcal{V})) = \mathrm{conv}(\mathcal{V}).
\]
On the other hand, it is clear that $\mathcal{V} \subseteq \mathcal{S}$, since each $P^{(i)} \otimes Q^{(j)}$ is the outer product of points from $\mathcal{P}$ and $\mathcal{Q}$, respectively. Therefore,
\(
\mathrm{conv}(\mathcal{V}) \subseteq \mathrm{conv}(\mathcal{S}).
\)
Putting both inclusions together yields the equality of the two convex hulls:
\[
\mathrm{conv}(\mathcal{S}) = \mathrm{conv}(\mathcal{V}).
\]
Thus, $\mathrm{conv}(\mathcal{V})$ is exactly the convex enclosure of $\mathcal{S}$.

Since the inner optimisation problem in Equation~\eqref{eq:bellman} is linear in the transition probabilities $T(s' \,|\, s,a)$, its optimum is attained at an extreme point of $\mathrm{conv}(\mathcal{S})$. The claim follows from the equality $\mathrm{conv}(\mathcal{S}) = \mathrm{conv}(\mathcal{V})$.
\end{proof}

\subsection{Proof of Theorem 2}

\setcounter{theorem}{1} 
\begin{theorem}[Restated]
Let \(\mathcal{P}_p(\hat{P}, \varepsilon_1)\) and \(\mathcal{P}_p(\hat{Q}, \varepsilon_2)\) be two \(L_p\) uncertainty sets for any \(1 \leq p \leq \infty\). Then,
\[
\mathcal{P}_p(\hat{P}, \varepsilon_1) \otimes \mathcal{P}_p(\hat{Q}, \varepsilon_2) \subseteq \mathcal{P}_p(\hat{P} \otimes \hat{Q}, \varepsilon_1 + \varepsilon_2). 
\] 
\end{theorem}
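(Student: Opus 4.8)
The plan is to take an arbitrary element $H \in \mathcal{P}_p(\hat{P}, \varepsilon_1) \otimes \mathcal{P}_p(\hat{Q}, \varepsilon_2)$ and show directly that $\|H - \hat{P}\otimes\hat{Q}\|_p \le \varepsilon_1 + \varepsilon_2$. By definition of the product set, $H = P \otimes Q$ for some $P$ with $\|P - \hat{P}\|_p \le \varepsilon_1$ and $Q$ with $\|Q - \hat{Q}\|_p \le \varepsilon_2$. First I would introduce the ``telescoping'' decomposition
\[
P \otimes Q - \hat{P} \otimes \hat{Q}
= (P - \hat{P}) \otimes Q \;+\; \hat{P} \otimes (Q - \hat{Q}),
\]
which follows from bilinearity of $\otimes$ (the same property invoked in Theorem~\ref{thm:vertex}). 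Then by the triangle inequality for $\|\cdot\|_p$,
\[
\|P \otimes Q - \hat{P} \otimes \hat{Q}\|_p
\le \|(P - \hat{P}) \otimes Q\|_p + \|\hat{P} \otimes (Q - \hat{Q})\|_p.
\]

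The key step is the multiplicativity of the $\ell_p$ norm under Kronecker products: for any vectors $u, v$ one has $\|u \otimes v\|_p = \|u\|_p \, \|v\|_p$, since $\|u\otimes v\|_p^p = \sum_{i,j} |u_i v_j|^p = \bigl(\sum_i |u_i|^p\bigr)\bigl(\sum_j |v_j|^p\bigr)$ for finite $p$, and the $p=\infty$ case follows analogously (or by taking limits). Applying this to both terms gives
\[
\|P \otimes Q - \hat{P}\otimes\hat{Q}\|_p
\le \|P - \hat{P}\|_p\,\|Q\|_p + \|\hat{P}\|_p\,\|Q - \hat{Q}\|_p.
\]
Now I would use that $Q, \hat{P} \in \Delta$ are probability distributions, so $\|Q\|_p \le 1$ and $\|\hat{P}\|_p \le 1$ (for $p \ge 1$, since $\sum_i q_i = 1$ with $q_i \ge 0$ forces $\sum_i q_i^p \le 1$). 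Substituting yields $\|P \otimes Q - \hat{P}\otimes\hat{Q}\|_p \le \varepsilon_1 + \varepsilon_2$. It remains to observe that $H = P \otimes Q$ is itself a probability distribution in $\Delta_{M \cdot N}$ (Eq.~\eqref{eq:productdists} and the fact that the entries sum to $(\sum_i p_i)(\sum_j q_j) = 1$), so $H \in \mathcal{P}_p(\hat{P}\otimes\hat{Q}, \varepsilon_1 + \varepsilon_2)$, completing the inclusion.

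The main obstacle, such as it is, is handling the edge cases cleanly: verifying the norm-multiplicativity identity uniformly across all $1 \le p \le \infty$ (the $p = \infty$ case needs the $\max$ version, $\|u \otimes v\|_\infty = \|u\|_\infty \|v\|_\infty$, which is immediate), and confirming the bound $\|\rho\|_p \le 1 = \|\rho\|_1$ for any $\rho \in \Delta$ and $p \ge 1$ — this is the standard monotonicity of $\ell_p$ norms in $p$ on the simplex. Neither is deep, but both must be stated to make the chain of inequalities airtight. The extension to products of more than two marginals then follows by a routine induction, absorbing one radius at a time.
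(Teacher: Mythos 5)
Your proof is correct and follows essentially the same route as the paper's: the identity $P\otimes Q-\hat P\otimes\hat Q=(P-\hat P)\otimes Q+\hat P\otimes(Q-\hat Q)$ is exactly the paper's entrywise decomposition, Minkowski's inequality plays the role of your triangle inequality, and your use of $\|u\otimes v\|_p=\|u\|_p\|v\|_p$ together with $\|\rho\|_p\le 1$ on the simplex is precisely the paper's computation $\sum_y|Q'(y)|^p\le\sum_y Q'(y)=1$. Your explicit treatment of the $p=\infty$ case and of $H\in\Delta_{M\cdot N}$ is a slight (and welcome) tightening of details the paper leaves implicit, but the argument is the same.
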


\begin{proof}
The proof follows from Minkowski's inequality~\citep{Rudin1987}, a generalisation of the triangle inequality, which establishes that for any two vectors \(u, v\in \mathbb{R}^n\):
\[
\|u + v\|_p \leq \|u\|_p + \|v\|_p.
\]
It suffices to show that for any distributions \(P, P' \in \Delta_M\) and \(Q, Q' \in \Delta_N\), it holds that
\[
\|P' \otimes Q' - P \otimes Q\|_p \leq \|P' - P\|_p + \|Q' - Q\|_p.
\]
By definition,
\begin{align*}
&P'(x)Q'(y) - P(x)Q(y)\\
= \quad &Q'(y)[P'(x) - P(x)] + P(x)[Q'(y) - Q(y)].
\end{align*}
Hence,
\begin{align*}
|P'(x)Q'(y) - P(x)Q(y)|^p =  &\big|Q'(y)[P'(x) - P(x)] \\
&+ P(x)[Q'(y) - Q(y)]\big|^p.
\end{align*}
Summing over \((x, y)\), taking the \(p\)-th root, and applying Minkowski's inequality gives:
\begin{align*}
&\|P' \otimes Q' - P \otimes Q\|_p\\
&\leq 
\left( \sum_{x, y} \big|Q'(y)[P'(x) - P(x)]\big|^p \right)^{1/p} \\
&\quad +
\left( \sum_{x, y} \big|P(x)[Q'(y) - Q(y)]\big|^p \right)^{1/p}.
\end{align*}
For the first term:
\begin{align*}
&\sum_{x, y} \big|Q'(y)[P'(x) - P(x)]\big|^p \\
=& \sum_y |Q'(y)|^p \sum_x |P'(x) - P(x)|^p \\
\leq& \sum_x |P'(x) - P(x)|^p,
\end{align*}
since \(\sum_y |Q'(y)|^p \leq \sum_y Q'(y) = 1\) for any \(p \geq 1\). Taking the \(p\)-th root yields
\[
\left( \sum_{x, y} \big|Q'(y)[P'(x) - P(x)]\big|^p \right)^{1/p} \leq \|P' - P\|_p.
\]
A symmetric argument shows
\[
\left( \sum_{x, y} \big|P(x)[Q'(y) - Q(y)]\big|^p \right)^{1/p} \leq \|Q' - Q\|_p.
\]
Combining both bounds concludes the proof:
\[
\|P' \otimes Q' - P \otimes Q\|_p \leq \|P' - P\|_p + \|Q' - Q\|_p.
\]
\end{proof}

\section{McCormick Linear Program}
\label{app:mccormick}

We present in detail the linear‐program (LP) construction that replaces the bilinear inner minimisation in Equation~\eqref{eq:bellman} with a convex relaxation based on McCormick envelopes, and show how this generalises to more than two marginal distributions by applying the construction recursively. 

\subsection{Linear Program Construction}
Recall that for two box‐type marginal uncertainty sets
\begin{align*}
\mathcal{P}_B &= \bigl\{P = (p_1,\dots,p_N)\in\Delta_N \mid \underline{p}_i \le p_i \le \overline{p}_i\bigr\},\\
\mathcal{Q}_B &= \bigl\{Q = (q_1,\dots,q_M)\in\Delta_M \mid \underline{q}_j \le q_j \le \overline{q}_j\bigr\},
\end{align*}
the exact inner minimisation over all product distributions can be written as
\[
\min_{P\in\mathcal{P}_B,\;Q\in\mathcal{Q}_B}
\sum_{i=1}^{N}\sum_{j=1}^{M} a_{ij}\,p_i\,q_j,
\]
where the matrix \(A=(a_{ij})\in\mathbb{R}^{N\times M}\) contains the values \(V^{*}_{\tilde{M}}(s')\) of the successor states \(s'\). Directly optimising this bilinear form is intractable in general.  Instead, we introduce auxiliary variables
\[
h_{ij} \;\approx\; p_iq_j,
\]
for each pair \((i,j)\).  Each \(h_{ij}\) is then constrained by the four McCormick inequalities (Equations~\eqref{eq:mc-lb1}–\eqref{eq:mc-ub2}), which together with the box bounds on \(p_i\) and \(q_j\) enclose exactly the convex hull of each \(\{(p_i,q_j,p_iq_j)\}\).  Finally, to ensure that the auxiliaries \(\{h_{ij}\}_{i,j}\) define a valid probability distribution, we impose the coupling simplex constraint
\[
\sum_{i=1}^{N}\sum_{j=1}^{M} h_{ij} \;=\; 1.
\]
Putting these components together yields the following LP for the inner problem, which is both tractable and tight:
\begin{align*}
\min_{p,q,h}\quad &\sum_{i=1}^{N}\sum_{j=1}^{M} a_{ij}\,h_{ij},\\
\text{subject to}\quad &\underline{p}_i \;\le\; p_i \;\le\; \overline{p}_i,
\quad i=1,\dots,N,\\
                 &\underline{q}_j \;\le\; q_j \;\le\; \overline{q}_j,
\quad j=1,\dots,M,\\
                 &h_{ij} \;\ge\; p_i\,\underline{q}_j + q_j\,\underline{p}_i - \underline{p}_i\,\underline{q}_j,\\
                 &h_{ij} \;\ge\; p_i\,\overline{q}_j + q_j\,\overline{p}_i - \overline{p}_i\,\overline{q}_j,\\
                 &h_{ij} \;\le\; p_i\,\overline{q}_j + q_j\,\underline{p}_i - \underline{p}_i\,\overline{q}_j,\\
                 &h_{ij} \;\le\; p_i\,\underline{q}_j + q_j\,\overline{p}_i - \overline{p}_i\,\underline{q}_j,\\
                 &\sum_{i=1}^{N}\sum_{j=1}^{M} h_{ij} \;=\; 1.
\end{align*}
Each pair \((i,j)\) contributes precisely four McCormick constraints, the box constraints on \(p_i\) and \(q_j\) add \(2N + 2M\) inequalities, and the single global simplex constraint couples all \(h_{ij}\). As a result, the total number of constraints is 
$4NM + 2N + 2M + 1$,
growing linearly in the product support size \(N\cdot M\) and polynomially in the marginal supports.

\subsection{Recursive McCormick Relaxation}

When composing \(n>2\) marginal box-like uncertainty sets 
\(\{\mathcal{P}_B^{k}\subseteq\Delta_{m_k}\}_{k=1}^{n}\), each of the form:
\[
\mathcal{P}^{k}_B = \bigl\{(p^k_1,\dots,p^k_{m_k})\in\Delta_{m_k}
  \;\big|\; \underline p^k_i\le p^k_i\le\overline p^k_i\bigr\},
\]
the inner minimisation becomes the multilinear program:
\[
\min_{P^1\in \mathcal{P}^1_B,\dots,P^n\in \mathcal{P}^n}
\sum_{i_1=1}^{m_1}\cdots\sum_{i_n=1}^{m_n}
a_{i_1\cdots i_n}\,\prod_{k=1}^n p^k_{i_k}.
\]
We follow the standard recursive extension of the McCormick relaxation to multi-linear problems as described by~\citet{DBLP:journals/jgo/RyooS01} and~\citet{DBLP:journals/corr/abs-2207-08955}, which operates in three steps:

\paragraph{Step 1: Introduce auxiliaries.}  
For each index-tuple \((i_1,\dots,i_n)\), we introduce \(n-1\) auxiliary variables replacing each bilinear partial product:
\[
\begin{aligned}
&p_{i_1}^1\,p_{i_2}^2\,\dots\,p_{i_{n-1}}^{n-1}\,p_{i_n}^n
   \\[-10pt]
& \underset{=: h_1}{\rotatebox{270}{$\left.\rule{0pt}{0.5cm}\right\}$}}\\[-10pt]
& \underset{=: h_{n-2}}{\rotatebox{270}{$\left.\rule{0pt}{1.25cm}\right\}$}}\\[-10pt]
& \underset{=: h_{n-1}}{\rotatebox{270}{$\left.\rule{0pt}{1.55cm}\right\}$}}\\[0pt]
\end{aligned}
\]

\paragraph{Step 2: McCormick relaxations.}  
Each bilinear equation \(h \approx u\,v\) (with either \(u=p^1_{i_1},v=p^2_{i_2}\) or \(u=h_{r-1},v=p^{r}_{i_{r}}\)) is relaxed by the four McCormick inequalities (Eqs.~\eqref{eq:mc-lb1}–\eqref{eq:mc-ub2}).  The upper and lower bounds for each auxiliary $s_{r}$ are obtained by simple interval arithmetic:
\begin{align*}
    (\underline{h}_1,\overline{h}_1) &= (\underline{p}_{i_1}^1\underline{p}_{i_2}^2, \overline{p}_{i_1}^1\overline{p}_{i_2}^2), \text{ and}\\
    (\underline{h}_r,\overline{h}_r) &= (\underline{h}_{r-1}\underline{p}_{i_r}^r, \overline{h}_{r-1}\overline{p}_{i_r}^r), \text{ for $r > 1$}.
\end{align*}

\paragraph{Step 3: Linking and objective.}  
Replace the original objective by
\[
\min_{p,h}\quad \sum_{i_1,\dots,i_n} a_{i_1\cdots i_n}\;h^{\,n-1}_{i_1,\dots,i_n},
\]
subject to:
\begin{itemize}
  \item Box constraints \(\underline p^k_i\le p^k_i\le \overline p^k_i\) for all \(k,i\).
  \item McCormick inequalities for each bilinear pair \((u,v)\) defining every \(h^r\).
  \item Global simplex constraint 
    \(\sum_{i_1,\dots,i_n}h^{\,n-1}_{i_1,\dots,i_n}=1\).
\end{itemize}

\paragraph{Complexity.}  
There are \(\prod_{k}m_k\) index‐tuples and \(n-1\) auxiliaries per tuple, yielding \(4(n-1)\prod_{k}m_k\) McCormick inequalities.  Adding \(2\sum_km_k\) box constraints and one simplex equality gives the total number of constraints:
\[
4(n-1)\prod_{k = 1}^n m_k \;+\;2\sum_{k=1}^nm_k\;+\;1
\in \mathcal O\bigl(n \cdot \prod_{k = 1}^n m_k\bigr).
\]
This grows linearly in the product support and exponentially only in the number of factors \(n\), while it is polynomial in the individual support sizes, avoiding a doubly-exponential blow‐up of vertex enumeration.  Furthermore, by propagating bounds via interval arithmetic at each step, the relaxation remains at least as tight as the interval‐arithmetic relaxation.

\section{Vertex Cardinality for $L_1$ and $L_\infty$ Uncertainty Sets}
\label{app:vertexgrwoth}

We demonstrate that for common classes of marginal uncertainty sets, specifically \(L_{1}\), \(L_{\infty}\) balls or box-type sets, the number of vertices can grow exponentially with the support size. As a result, solving the inner optimisation problem in Equation~\eqref{eq:bellman} via explicit vertex enumeration can become infeasible even for moderate support dimensions.

Consider an \(L_\infty\) uncertainty set \(\mathcal{P}\) centred at the uniform nominal distribution
\[
\hat{P} = \left(\tfrac{1}{k}, \dots, \tfrac{1}{k}\right) \in \Delta_k,
\]
with \(k\) even, and radius \(\varepsilon = \tfrac{1}{k}\), that is,
\[
\mathcal{P} = \left\{ P \in \Delta_k \;\middle|\; \|P - \hat{P}\|_\infty \leq \tfrac{1}{k} \right\},
\]
which is a polytope within the \(k\)-dimensional probability simplex and is therefore constrained to the \((k - 1)\)-dimensional hyperplane defined by \(\sum_i p_i = 1\).

For each subset \(I \subseteq \{1, \dots, k\}\) with \(|I| = k/2\), define a distribution \(P^I\) by
\[
p_i^I =
\begin{cases}
\frac{2}{k}, & i \in I, \\
0, & i \notin I.
\end{cases}
\]
It is straightforward to verify that \(P^I \in \mathcal{P}\) for any such \(I\), since these are valid probability distributions and the \(L_\infty\) constraint imposes the bounds \(0 \leq p_i \leq \tfrac{2}{k}\). 

A point in a \((k - 1)\)-dimensional polytope is a vertex if it lies at the intersection of \((k - 1)\) independent active constraints (excluding the affine simplex equality that defines the ambient space). At any distribution \(P^I\), there are:
\begin{enumerate}
  \item \(k/2\) active upper bounds \(p_i = \tfrac{2}{k}\) for \(i \in I\),
  \item \(k/2\) active lower bounds \(p_i = 0\) for \(i \notin I\).
\end{enumerate}
This gives a total of \(k\) active inequality constraints. One can verify that among these \(k\) coordinate-bound constraints, one linear combination corresponds to the simplex constraint, leaving exactly \(k - 1\) linearly independent active inequalities. Therefore, each \(P^I\) is indeed a vertex of \(\mathcal{P}\).

There are \(\binom{k}{k/2}\) such subsets \(I\) with \(|I| = k/2\), and thus at least that many distinct vertices in \(\mathcal{P}\). The quantity \(\binom{k}{k/2}\) is known as the \emph{central binomial coefficient}, and its growth can be accurately described using Stirling's approximation~\citep{Stirling1730,DBLP:books/daglib/0023751} as
\[
\binom{k}{k/2} \in \Theta\left(\frac{2^k}{\sqrt{k}}\right).
\]

Using the same construction, we can show that the \(L_1\) uncertainty set
\[
\mathcal{P} = \left\{ P \in \Delta_k \;\middle|\; \|P - \hat{P}\|_1 \leq 1 \right\}
\]
also contains at least \(\binom{k}{k/2}\) vertices. Note that the maximum \(L_1\) distance between any two probability distributions in \(\Delta_k\) is 2, and $||P^I - \hat{P}||_1 = 1$ for each $I$. Hence, this allows for the same type of extremal distributions constructed above.

Therefore, for both of these common classes of marginal uncertainty sets, explicit vertex enumeration of the resulting product polytope is generally intractable, and we must instead rely on the relaxations introduced in Section~\ref{sec:relaxations}.

We remark that the examples above are not pathological but rather characteristic of marginal uncertainty sets represented by  $L_1$- or $L_\infty$-norm balls. These sets induce combinatorial constraints, i.e., choices of which coordinates hit a bound, and each such choice can give rise to a distinct vertex. This is a natural outcome whenever uncertainty acts coordinate-wise.
However, we also note that the construction above represents an extreme case. The number of vertices can  be significantly smaller, for example when the nominal distribution lies close to a vertex of the probability simplex.

\section{Overapproximating $L_1$ Balls as Boxes}
\label{app:l1box}
McCormick envelopes are constructed using upper and lower bounds on each component of the marginal distributions, and thus naturally accommodate the product of box-type uncertainty sets derived from confidence intervals, such as the exact Clopper–Pearson interval~\citep{ClopperPearson1934} discussed in Section~\ref{sec:learning}. Our experiments show that McCormick envelopes are effective in circumventing the exponential blow-up in the number of vertices when analyzing the product of such polytopes. 
However, they do not directly extend to uncertainty sets in the form of \(L_1\) balls derived from Weissman’s inequality~\citep{WeissmanEtAl2003}. One might attempt to apply McCormick envelopes to these \(L_1\) sets by computing upper and lower bounds on each component of the marginal distribution under the \(L_1\) constraint. Yet, as we show below, the tightest such bounds coincide precisely with those obtained from Hoeffding's inequality~\citep{Hoeffding1994}, which are known to be less tight than the exact Clopper–Pearson bounds.

As a result, McCormick envelopes offer no practical advantage for \(L_1\) uncertainty sets, and we must instead rely on the radii-sum result in Theorem~\ref{thm:radiisum} to enable efficient computation under the product uncertainty. This is the same approach employed by~\citet{Strehl2007} in his PAC analysis for model-based reinforcement learning in factored MDPs.

Let \(\mathcal{P}\) be an \(L_1\) uncertainty set defined as
\[
\mathcal{P}
= \left\{\,P' \in \Delta_a \;\middle|\;
\|P' - \hat{P}\|_{1} \le \varepsilon \right\},
\]
centred around the nominal distribution \(\hat{P} = (\hat{p}_1, \dots, \hat{p}_a) \in \Delta_a\), where \(\varepsilon\) is given by~\citet{WeissmanEtAl2003} as
\[
\varepsilon = \sqrt{\frac{2\bigl[\ln(2^{a} - 2) - \ln(\delta)\bigr]}{n}},
\]
with \(a\) denoting the support size and \(n\) the number of observed samples.

For any \(P' \in \mathcal{P}\) and for each component \(p'_i\), the maximum possible deviation from the nominal value \(\hat{p}_i\) is \(\varepsilon/2\). This is because, to satisfy the \(L_1\) constraint, one can allocate at most \(\varepsilon/2\) of additional probability mass to a single component, compensated by removing \(\varepsilon/2\) from the others, preserving the total mass of $1$ and ensuring that the sum of deviations remains bounded by \(\varepsilon\). This yields:
\[
\frac{\varepsilon}{2} = \sqrt{\frac{\ln(2^{a} - 2) - \ln(\delta)}{2n}}.
\]
Since the bound \(\varepsilon\) increases with the support size \(a\), the smallest possible deviation arises in the case of \(a = 2\) successors, leading to:
\begin{align*}
\frac{\varepsilon}{2} &= \sqrt{\frac{\ln(2) - \ln(\delta)}{2n}} \\
&= \sqrt{\frac{\ln\left(\frac{2}{\delta}\right)}{2n}},
\end{align*}
which matches the classical confidence bound for the binomial distribution derived from Hoeffding's inequality~\citep{Hoeffding1994}. This bound is known to be no tighter than the exact confidence interval given by~\citet{ClopperPearson1934}, which we use in our construction of box-type uncertainty sets in Section~\ref{sec:learning}.
Therefore, the tightest possible component-wise upper and lower bounds that can be derived from an \(L_1\) constraint are never sharper than those obtained directly from exact confidence intervals. As a result, there is no practical advantage to combining \(L_1\) uncertainty sets with McCormick envelopes.

\section{Detailed Benchmark Environments}
\label{app:benchmarks}

We provide details on the benchmark environments employed in our experiments and the used hyperparameters. For learning benchmarks, we may select separate instances from those used in the solution case studies, since the rf-MDP must be re‐solved each time the sampling policy is updated. Specifically, whenever any factor’s sample count doubles (cf.\ Section~\ref{sec:experiments}), we recompute the optimal optimistic policy during the rollout of a total of $10^6$ fixed‐length trajectories. All our used environments and case studies are publicly available\footnote{\url{https://zenodo.org/records/17580296}}.

\subsection{Aircraft Collision Avoidance}
The aircraft collision avoidance environment is a version of the family of models introduced in~\citet{kochenderfer2015}. We consider an \(N \times M\) grid, where two aircraft, one controlled by our agent and one adversarial, fly toward each other. At each time step, both pilots may choose to fly straight, ascend, or descend, with these actions succeeding independently with probabilities \(p\) and \(q\), respectively. The agent's objective is to reach the opposite end of the grid without colliding with the adversarial aircraft, which manoeuvres arbitrarily. A collision is defined as entering a specified radius around the adversarial aircraft. The objective value is the probability of reaching the goal zone without collision. The two aircraft form the factors of the factored MDP, and evolve independently until a collision occurs. For the solving benchmarks, we set $(N,M)=(20,24)$. For the learning benchmarks, we employ environments with $(N,M)=(15,15)$ and sample trajectories of fixed length $l = 15$.

\subsection{System Administrator}
The System Administrator (or SysAdmin) domain is a standard benchmark in factored MDPs~\cite{DBLP:journals/jair/GuestrinKPV03}. In this setting, an administrator manages a network of \(N\) computers, each connected to a subset of the others. We consider the bidirectional ring topology, where each computer is connected to its two immediate neighbors, forming a closed loop. 
At each time step, a computer can be in one of two states: running or failed. The probability that a running computer fails depends on whether its connected machines are currently in the failure state. As a result, the computers form the factors of the factored MDP, with dependencies only between adjacent nodes. 
The administrator may choose to repair one machine per time step, which then returns to the running state with high probability. The administrator receives a reward at each step that increases with the number of machines currently running. The objective is to maximise the expected cumulative reward over a fixed time horizon $T$. For the solving benchmarks, we consider $N=10$ machines with a time horizon of $T=15$. For the learning benchmarks, we use $T=10$ and sample trajectories of that length.

\subsection{Frozen Lake} The Frozen Lake environment is a standard benchmark from OpenAI Gym~\cite{DBLP:journals/corr/gym}. We consider an \(N \times N\) frozen lake grid, where some cells contain holes in the ice. The agent can move in the four cardinal directions and aims to reach a designated goal cell without falling into any of the holes. Due to the slippery surface, there is a probability at each step that the agent ends up in a nearby cell rather than the one intended.
To model this as a factored MDP, we extend the environment into a multi-agent setting: we introduce a second agent, and the task is now to manoeuvre both agents so that they each reach their goal without falling into a hole or colliding with each other. The two agents act as the factors of the factored MDP and move independently in the environment, except when a collision occurs. The objective is to minimise the expected number of steps required for both agents to reach their respective goal cells. For the solving benchmarks we use a grid of \(N=15\) size, whereas for the learning benchmarks we use \(N=8\),  and we sample trajectories of fixed length $l = 100$.  

\subsection{Stock Trading} The stock-trading domain~\citep{DBLP:conf/aaai/StrehlDL07} simulates a stock market with \(N\) economic sectors, each containing \(M\) individual stocks. At each time step, the agent can choose to buy or sell an entire sector, thereby either owning or not owning all the stocks within that sector in the next step.
Stocks can at each step be rising or falling. The probability that a stock rises depends on whether other stocks in the same sector were rising in the previous time step. As a result, the sectors form the factors of the factored MDP and evolve independently from one another. The agent receives a positive reward for each rising stock it owns, and a negative reward for each owned stock that falls. No reward is gained or lost from stocks in sectors the agent does not own. The objective is to maximise the cumulative reward over a fixed time horizon \(T\). For the solving benchmarks, we use \(N=3\) sectors with \(M=2\) stocks each and \(T=20\). For the learning benchmarks, we set \(N=2\), \(M=2\), and \(T=10\). We sample trajectories for the full time horizon length $l = 10$.

\subsection{Drone Delivery}
The drone delivery problem~\cite{DBLP:journals/sttt/BadingsCJJKT22,DBLP:conf/tacas/SchnitzerAP25} is a multi-agent factored MDP in which \(N\) autonomous drones navigate a shared 3D environment to deliver payloads to a designated target zone while avoiding both static obstacles and each other. At each time step, every drone may choose one of six actions: move north, south, east, west, ascend, or descend. Due to wind, the dynamics are subject to stochastic disturbances and the drone may drift. 
Each drone \(i\) constitutes a factor in the factored MDP: its local state is its \((x,y,z)\) coordinate, and its transition kernel depends only on its own current state and chosen action. The objective value is the probability of safely reaching the target zone without crashing, over an infinite time horizon. We consider a version with \(N=2\) drones as the state factors. For the solving benchmarks we use an \(8\times8\times8\) environment, and for the learning benchmarks we use dimensions \(5\times5\times5\) and learn from sampled trajectories of length $l = 50$.  

\subsection{Herman’s Self-Stabilising Protocol}
The Herman self-stabilising protocol~\cite{DBLP:journals/ipl/Herman90} is a randomised synchronous algorithm for achieving self-stabilisation in a unidirectional token-ring of \(N\) identical processes. Each process \(i\) maintains a Boolean variable \(x_i\in\{0,1\}\), and initially an arbitrary (odd) number of tokens may be present, each token being indicated by a process having the same bit as its counterclockwise neighbour. At each time step, every process that holds a token (i.e.\ \(x_i = x_{i-1}\)) flips its bit with probability \(r\in(0,1)\) (thereby passing the token clockwise) and retains its bit with probability \(1-r\). Processes without a token keep their bit unchanged. If a process retains its token and simultaneously receives another token from its neighbour, both tokens annihilate.  
We model this as a factored MDP. The global state factorises over the \(N\) processes, each of which evolves conditionally on its own bit and that of its counter-clockwise neighbour. 
The processes are \emph{stable} when a single token is present, the objective value is the expected number of steps until stabilisation is attained. We consider a version with $N= 11$ parallel processes.

\subsection{Chain}
The chain benchmark~\cite{DBLP:conf/ewrl/Araya-LopezBTC11} comprises \(N\) independent chains, each with states \(\{1,\dots,M\}\), constituting the factors. At every time step, a chain in state \(i < M-1\) either advances to \(i+1\) with probability $p$ or resets to state~1 with probability $1-p$. From the penultimate state \(M-1\), it either moves to the terminal state \(M\) or, upon failure, falls back uniformly to one of the earlier states \(\{1,\dots,M-2\}\). Once a chain reaches state \(M\), it remains there indefinitely. The objective is the expected number of steps for all chains to reach the final state \(M\). In our experiments, we consider a version with \(N=2\) chains and \(M=10\) states per chain.  

\section{Extended Experiments}
\label{app:experiments}

In the following, we provide extended results for both solving robust factored MDPs and learning robust policies. All experiments were performed on an Intel Xeon Gold (2.50 GHz, 40 cores) with 128 GB of RAM.

\paragraph{Solving rf-MDPs}Table~\ref{tab:case_study_extended} presents solving benchmarks across varying $L_\infty$ uncertainty radii \(\varepsilon\) around a nominal transition kernel, extending Section~\ref{sec:experiments}, which focused on \(\varepsilon=0.025\). As \(\varepsilon\) increases, the relative gap between the exact rf-MDP solution (via vertex enumeration) and the interval-arithmetic relaxation can rise significantly, up to 900\% in the most extreme of the considered cases, highlighting the growing conservatism of  the interval-arithmetic relaxation. In contrast, the McCormick relaxation remains exact for all tested radii, demonstrating its tightness even under elevated epistemic uncertainty.  

\paragraph{Robust Policy Learning}  
Figure~\ref{fig:learning_extended} displays results for all learning benchmarks, including the Stock Trading domain omitted from Figure~\ref{fig:learningresults}. Note that, in the Frozen Lake domain, the agent aims to minimise the objective, so the y-axis is inverted to reflect better performance at lower values. For each method and domain, we also report the total runtime required to sample and process trajectories. This runtime accounts for (i) sampling trajectories, (ii) recomputing the optimistic sampling policy, by solving the current rf-MDP under the most favorable environment policy, (iii) synthesising the optimal robust policy, and (iv) evaluating its performance on the hidden true MDP. 
Across all domains and methods, runtimes remain comparable, showing that the superior tightness of McCormick relaxations incurs negligible computational overhead.

We do not include the Chain and Herman domains in the learning benchmarks. These MDPs offer only a single action per state and evolve purely stochastically, making them unsuitable for exploration and policy improvement, though they remain instructive for solution‐quality evaluation.

In Table~\ref{tab:comparison}, we explicitly compare the sample efficiency and runtimes of the different methods by showing how many fixed-length trajectories each method requires to achieve the performance guarantee attained by the slowest method after processing the full set of $10^6$ trajectories. In our experiments, the slowest method was always the flat MDP learner that does not exploit the factored structure. The results demonstrate that factored learning with the McCormick relaxation is significantly more sample-efficient than the other methods, requiring orders of magnitude fewer samples than flat and $L_1$-based learning, and typically less than half the number of samples needed compared to using the interval-arithmetic relaxation. However, the runtimes remain comparable, although the interval-arithmetic relaxation, can be solved very efficiently using bisection techniques. This is due to the substantially reduced sample requirements.

\begin{landscape}
\begin{table}[!htbp]
\centering
\resizebox{0.9\linewidth}{!}{%
  \begin{tabular}{c c c c | c c | c c c | c c c}
    \toprule
    \multirow{2}{*}{\textbf{Domain}} 
      & \multirow{2}{*}{$\bm{|S|}$} 
      & \multirow{2}{*}{$\bm{|T|}$} 
      & \multirow{2}{*}{$\bm{\varepsilon}$}
      & \multicolumn{2}{c}{\textbf{Vertex Enumeration}}
      & \multicolumn{3}{c}{\textbf{Interval-Arithmetic}}
      & \multicolumn{3}{c}{\textbf{McCormick}} \\
    \cmidrule(lr){5-6} \cmidrule(lr){7-9} \cmidrule(lr){10-12}
    & & & 
      & \textbf{Robust Value}  & \textbf{Time [s]}
      & \textbf{Robust Value}  & \textbf{Rel.\ Gap} & \textbf{Time [s]}
      & \textbf{Robust Value}  & \textbf{Rel.\ Gap} & \textbf{Time [s]} \\
    \midrule
\multirow{6}{*}{\text{Aircraft ($\uparrow$)}} & \multirow{6}{*}{11153} & \multirow{6}{*}{1262099} & $0.01$ & $0.85$ & $2673.7$ & $0.83$ & $3$\% & $5.3$ & $0.85$ & $0$\% & $45.2$ \\
 &  &  & $0.02$ & $0.77$ & $3095.0$ & $0.71$ & $7$\% & $5.2$ & $0.77$ & $0$\% & $45.4$ \\
 &  &  & $0.025$ & $0.73$ & $2535.8$ & $0.65$ & $11$\% & $6.1$ & $0.73$ & $0$\% & $43.7$ \\
 &  &  & $0.03$ & $0.68$ & $2568.9$ & $0.57$ & $15$\% & $4.4$ & $0.68$ & $0$\% & $41.3$ \\
 &  &  & $0.04$ & $0.56$ & $1642.1$ & $0.42$ & $25$\% & $5.3$ & $0.56$ & $0$\% & $42.2$ \\
 &  &  & $0.1$ & $0.04$ & $1011.1$ & $0.004$ & $900$\% & $4.3$ & $0.04$ & $0$\% & $45.5$ \\
\midrule
\multirow{6}{*}{\text{Chain ($\downarrow$)}} & \multirow{6}{*}{100} & \multirow{6}{*}{3136} & $0.01$ & $266.45$ & $106.1$ & $299.11$ & $12$\% & $0.5$ & $266.45$ & $0$\% & $5.1$ \\
 &  &  & $0.02$ & $308.00$ & $699.5$ & $392.14$ & $27$\% & $0.6$ & $308.00$ & $0$\% & $7.0$ \\
 &  &  & $0.025$ & $331.34$ & $778.1$ & $451.28$ & $36$\% & $0.6$ & $331.34$ & $0$\% & $7.6$ \\
 &  &  & $0.03$ & $356.66$ & $746.8$ & $521.27$ & $46$\% & $0.7$ & $356.66$ & $0$\% & $7.8$ \\
 &  &  & $0.04$ & $413.98$ & $1054.5$ & $703.70$ & $70$\% & $1.0$ & $413.98$ & $0$\% & $9.7$ \\
 &  &  & $0.1$ & $1019.70$ & $5172.9$ & $6355.96$ & $523$\% & $4.8$ & $1019.70$ & $0$\% & $26.0$ \\
\midrule
\multirow{6}{*}{\text{Drone ($\uparrow$)}} & \multirow{6}{*}{262144} & \multirow{6}{*}{21694720} & $0.01$ & $0.72$ & $198.0$ & $0.70$ & $3$\% & $166.9$ & $0.72$ & $0$\% & $124.6$ \\
 &  &  & $0.02$ & $0.70$ & $1913.8$ & $0.65$ & $7$\% & $104.7$ & $0.70$ & $0$\% & $202.6$ \\
 &  &  & $0.025$ & $0.69$ & $2125.8$ & $0.63$ & $9$\% & $90.2$ & $0.69$ & $0$\% & $190.7$ \\
 &  &  & $0.03$ & $0.68$ & $2269.7$ & $0.60$ & $11$\% & $92.8$ & $0.68$ & $0$\% & $196.0$ \\
 &  &  & $0.04$ & $0.66$ & $2353.6$ & $0.55$ & $16$\% & $94.7$ & $0.66$ & $0$\% & $195.3$ \\
 &  &  & $0.1$ & $0.51$ & $599.0$ & $0.24$ & $53$\% & $183.1$ & $0.51$ & $0$\% & $197.9$ \\
\midrule
\multirow{6}{*}{\text{Herman ($\downarrow$)}} & \multirow{6}{*}{2048} & \multirow{6}{*}{177148} & $0.01$ & $18.13$ & $9.3$ & $19.10$ & $5$\% & $2.5$ & $18.13$ & $0$\% & $8.0$ \\
 &  &  & $0.02$ & $19.74$ & $10.3$ & $22.06$ & $12$\% & $2.7$ & $19.74$ & $0$\% & $7.4$ \\
 &  &  & $0.025$ & $20.64$ & $11.0$ & $23.82$ & $15$\% & $2.8$ & $20.64$ & $0$\% & $8.1$ \\
 &  &  & $0.03$ & $21.61$ & $11.4$ & $25.82$ & $20$\% & $3.3$ & $21.61$ & $0$\% & $8.9$ \\
 &  &  & $0.04$ & $23.78$ & $12.4$ & $30.66$ & $29$\% & $3.8$ & $23.78$ & $0$\% & $9.0$ \\
 &  &  & $0.1$ & $48.73$ & $24.7$ & $124.21$ & $155$\% & $11.8$ & $48.73$ & $0$\% & $19.2$ \\
\midrule
\multirow{6}{*}{\text{Frozen Lake ($\downarrow$)}} & \multirow{6}{*}{50625} & \multirow{6}{*}{1866556} & $0.01$ & $194.58$ & $83.5$ & $202.49$ & $4$\% & $56.1$ & $194.58$ & $0$\% & $55.6$ \\
 &  &  & $0.02$ & $208.29$ & $1023.3$ & $227.19$ & $9$\% & $66.0$ & $208.29$ & $0$\% & $100.8$ \\
 &  &  & $0.025$ & $216.01$ & $1018.4$ & $242.05$ & $12$\% & $67.7$ & $216.01$ & $0$\% & $105.9$ \\
 &  &  & $0.03$ & $224.41$ & $1082.9$ & $259.12$ & $15$\% & $77.6$ & $224.41$ & $0$\% & $113.3$ \\
 &  &  & $0.04$ & $243.60$ & $1218.9$ & $302.35$ & $24$\% & $99.8$ & $243.60$ & $0$\% & $130.5$ \\
 &  &  & $0.1$ & $520.80$ & $1687.3$ & $2797.57$ & $437$\% & $1282.7$ & $520.80$ & $0$\% & $316.3$ \\
\midrule
\multirow{6}{*}{\text{Stock Trading ($\uparrow$)}} & \multirow{6}{*}{12481} & \multirow{6}{*}{5362624} & $0.01$ & $30.44$ & $34.8$ & $26.42$ & $13$\% & $16.8$ & $30.44$ & $0$\% & $35.4$ \\
 &  &  & $0.02$ & $27.02$ & $66.5$ & $20.23$ & $25$\% & $14.9$ & $27.02$ & $0$\% & $64.3$ \\
 &  &  & $0.025$ & $25.43$ & $67.6$ & $17.60$ & $31$\% & $16.0$ & $25.43$ & $0$\% & $67.5$ \\
 &  &  & $0.03$ & $23.91$ & $66.0$ & $15.22$ & $36$\% & $17.5$ & $23.91$ & $0$\% & $64.6$ \\
 &  &  & $0.04$ & $21.13$ & $65.0$ & $11.17$ & $47$\% & $17.4$ & $21.13$ & $0$\% & $67.3$ \\
 &  &  & $0.1$ & $8.81$ & $101.5$ & $1.91$ & $78$\% & $13.2$ & $8.81$ & $0$\% & $104.6$ \\
\midrule
\multirow{6}{*}{\text{SysAdmin ($\uparrow$)}} & \multirow{6}{*}{15873} & \multirow{6}{*}{9332587} & $0.01$ & $54.66$ & $68.4$ & $52.77$ & $3$\% & $38.9$ & $54.66$ & $0$\% & $67.5$ \\
 &  &  & $0.02$ & $51.94$ & $68.5$ & $48.49$ & $7$\% & $33.3$ & $51.94$ & $0$\% & $66.0$ \\
 &  &  & $0.025$ & $50.70$ & $66.7$ & $46.66$ & $8$\% & $34.1$ & $50.70$ & $0$\% & $64.1$ \\
 &  &  & $0.03$ & $49.54$ & $65.7$ & $44.98$ & $9$\% & $34.5$ & $49.54$ & $0$\% & $66.0$ \\
 &  &  & $0.04$ & $47.43$ & $66.7$ & $42.11$ & $11$\% & $34.3$ & $47.43$ & $0$\% & $63.8$ \\
 &  &  & $0.1$ & $39.66$ & $59.2$ & $33.02$ & $17$\% & $31.6$ & $39.66$ & $0$\% & $59.7$ \\
\bottomrule
  \end{tabular}%
}
\caption{Extended results for solution benchmarks. Arrows ($\uparrow$/$\downarrow$) indicate optimisation directions. $|S|$ and $|T|$ denote the number of states and transitions, and $\varepsilon$ is the added $L_\infty$ uncertainty radius. The relative gap is $|V_{\mathrm{VE}} - V_{R}|/V_{R}$, where $V_{\mathrm{VE}}$ and $V_{R}$ are the robust results from vertex enumeration and relaxations.}
\label{tab:case_study_extended}
\end{table}

\end{landscape}

\begin{figure*}[htbp]
  \centering
    \begin{subfigure}[b]{\textwidth}
    \centering
    \includegraphics[width=0.8\textwidth]{figures/legend.pdf }
  \end{subfigure}
  \\[1em] 
  \begin{subfigure}[b]{0.3\textwidth}
    \centering
    \begin{subfigure}[b]{\textwidth}
      \includegraphics[width=\textwidth]{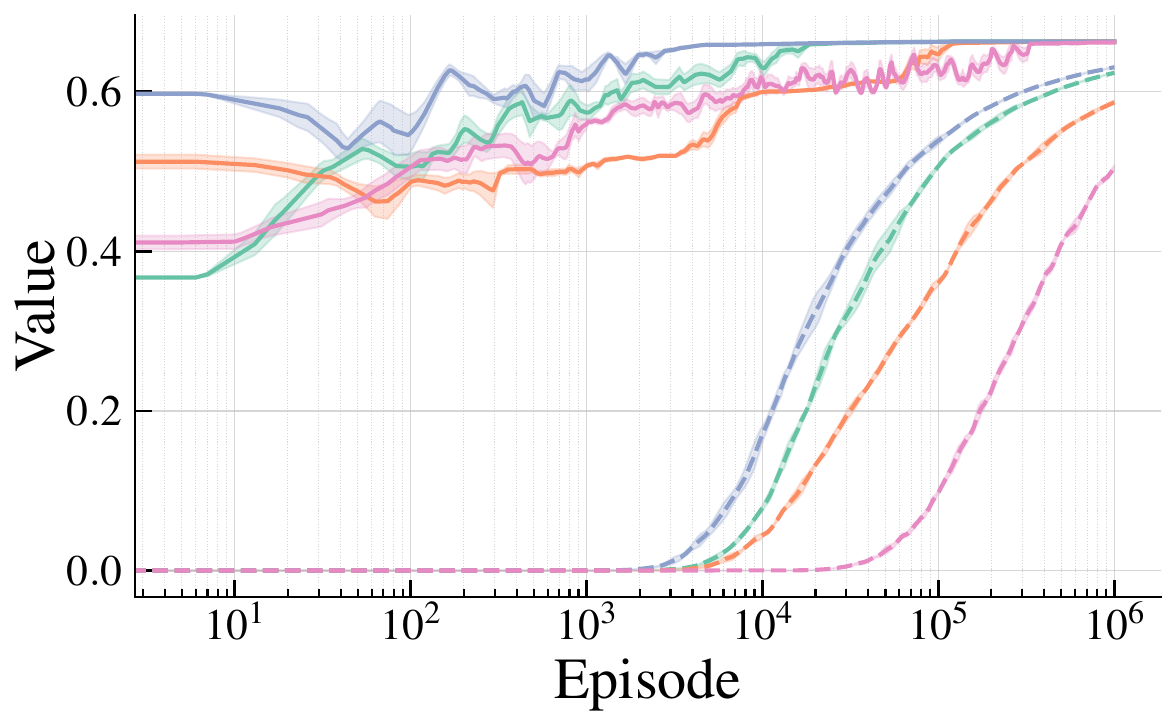}
      \label{fig:1a}
    \end{subfigure}
    \begin{subfigure}[b]{\textwidth}
      \includegraphics[width=\textwidth]{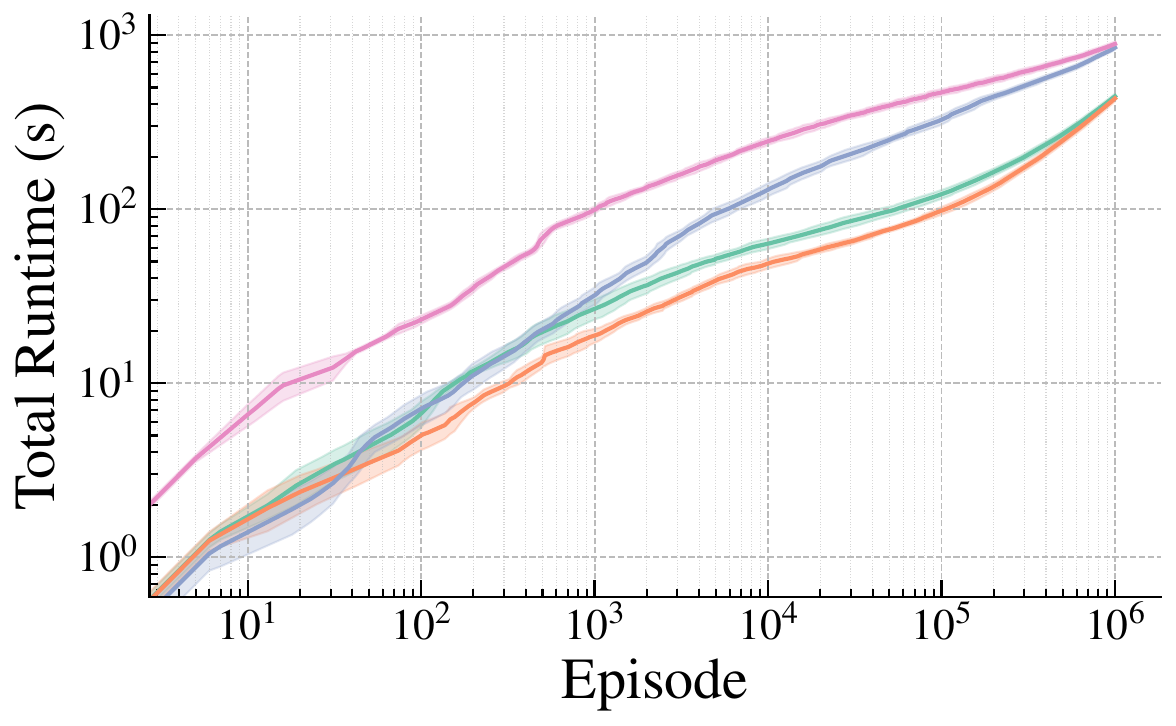}
      \label{fig:1b}
    \end{subfigure}
    \caption{Aircraft}
    \label{fig:stack1}
  \end{subfigure}
  \hspace{1em}
  \begin{subfigure}[b]{0.3\textwidth}
    \centering
    \begin{subfigure}[b]{\textwidth}
      \includegraphics[width=\textwidth]{figures/Experiments/LAKE_SWARM/eps=0.05,N=8,M=8,p=0.2/LAKE_SWARM_eps=0.05,N=8,M=8,p=0.2.pdf}
      \label{fig:2a}
    \end{subfigure}
    \begin{subfigure}[b]{\textwidth}
      \includegraphics[width=\textwidth]{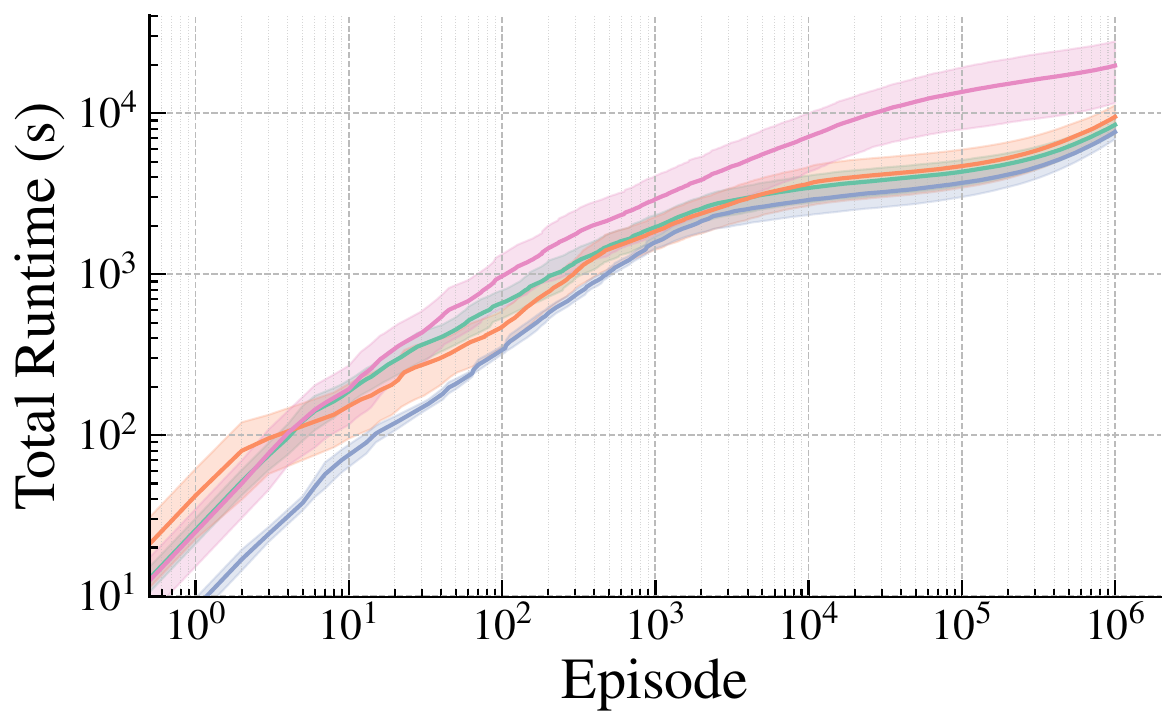}
      \label{fig:2b}
    \end{subfigure}
    \caption{Frozen Lake}
    \label{fig:stack2}
  \end{subfigure}
  \hspace{1em}
  \begin{subfigure}[b]{0.3\textwidth}
    \centering
    \begin{subfigure}[b]{\textwidth}
      \includegraphics[width=\textwidth]{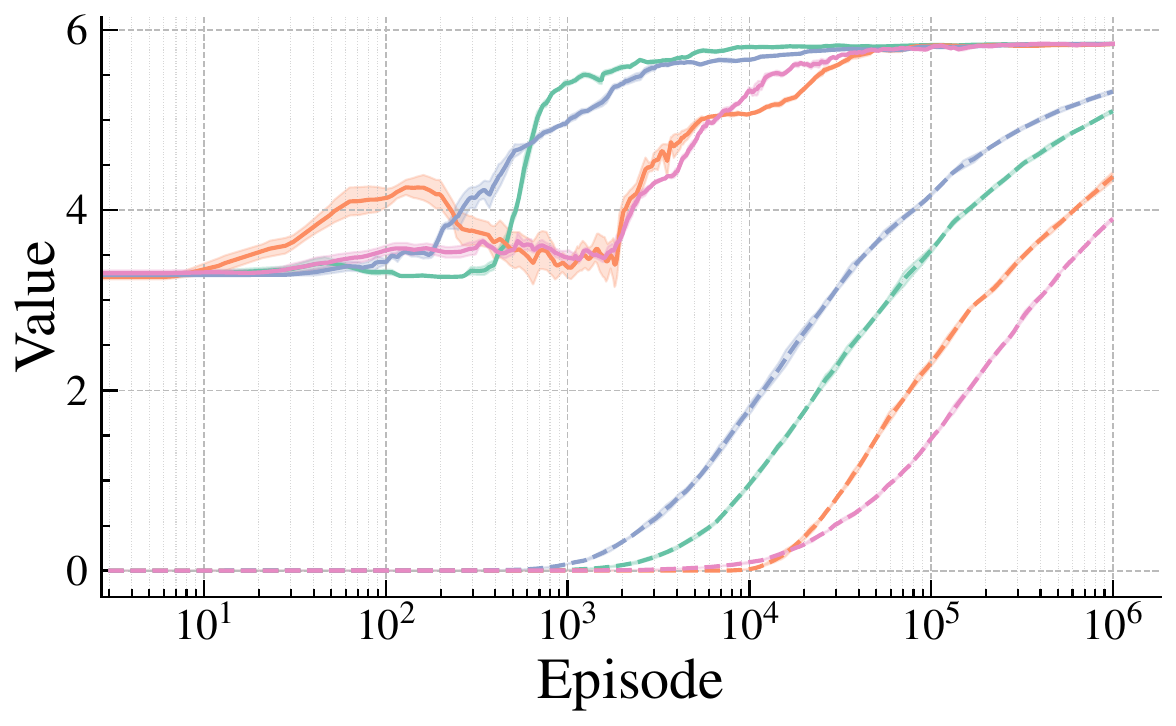}
      \label{fig:3a2}
    \end{subfigure}
    \begin{subfigure}[b]{\textwidth}
      \includegraphics[width=\textwidth]{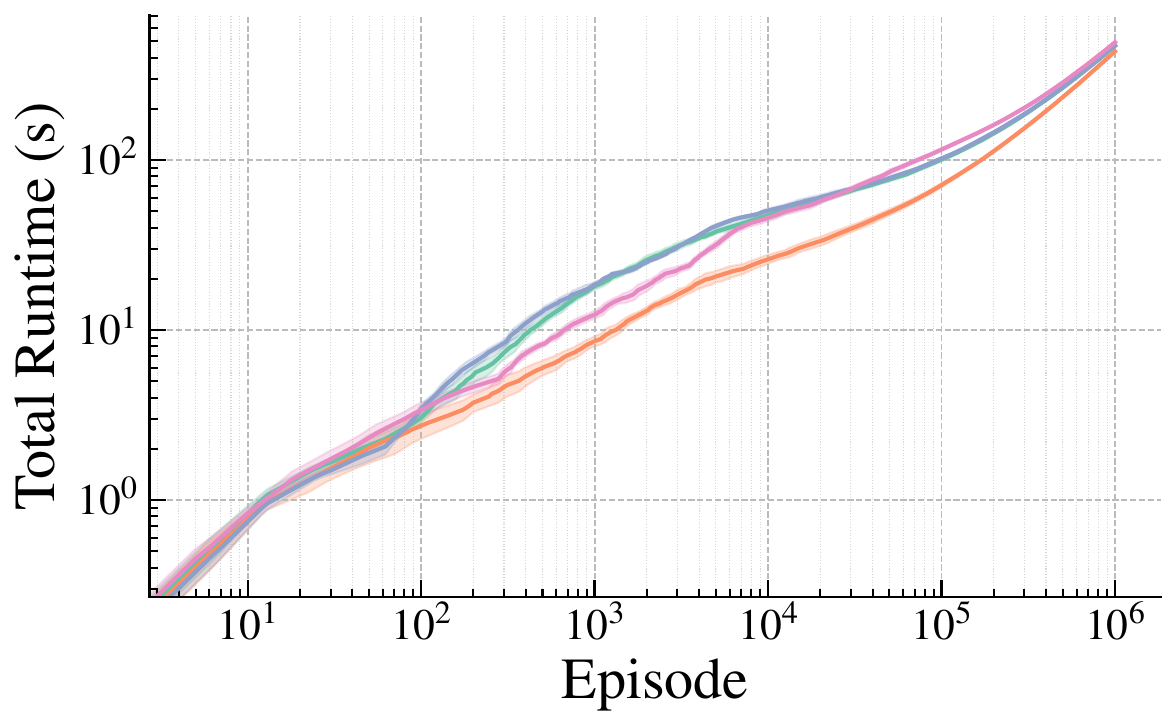}
      \label{fig:3b2}
    \end{subfigure}
    \caption{Stock Trading}
    \label{fig:stack42}
  \end{subfigure}

  \vspace{2ex}

  \begin{subfigure}[b]{0.3\textwidth}
    \centering
    \begin{subfigure}[b]{\textwidth}
      \includegraphics[width=\textwidth]{figures/Experiments/SYSADMIN/T=8,N=10,p0=0.1,p1=0.6/SYSADMIN_T=8,N=10,p0=0.1,p1=0.6.pdf}
      \label{fig:3a}
    \end{subfigure}
    \par\bigskip
    \begin{subfigure}[b]{\textwidth}
      \includegraphics[width=\textwidth]{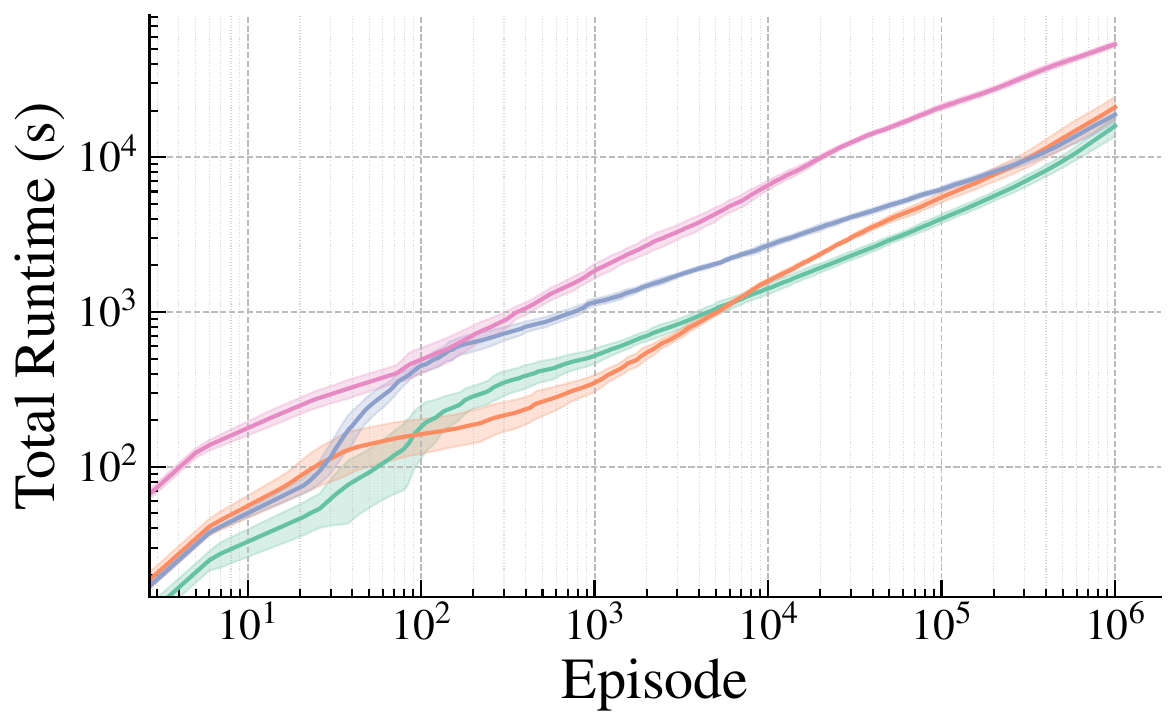}
      \label{fig:3b}
    \end{subfigure}
    \caption{SysAdmin}
    \label{fig:stack3}
  \end{subfigure}
   \hspace{2em}
  \begin{subfigure}[b]{0.3\textwidth}
    \centering
    \begin{subfigure}[b]{\textwidth}
      \includegraphics[width=\textwidth]{figures/Experiments/DRONE_MULTI/MAXX=5,MAXY=5,MAXZ=5,p=0.3,eps=0.029/DRONE_MULTI_MAXX=5,MAXY=5,MAXZ=5,p=0.3,eps=0.029.pdf}
      \label{fig:4a}
    \end{subfigure}
    \par\bigskip
    \begin{subfigure}[b]{\textwidth}
      \includegraphics[width=\textwidth]{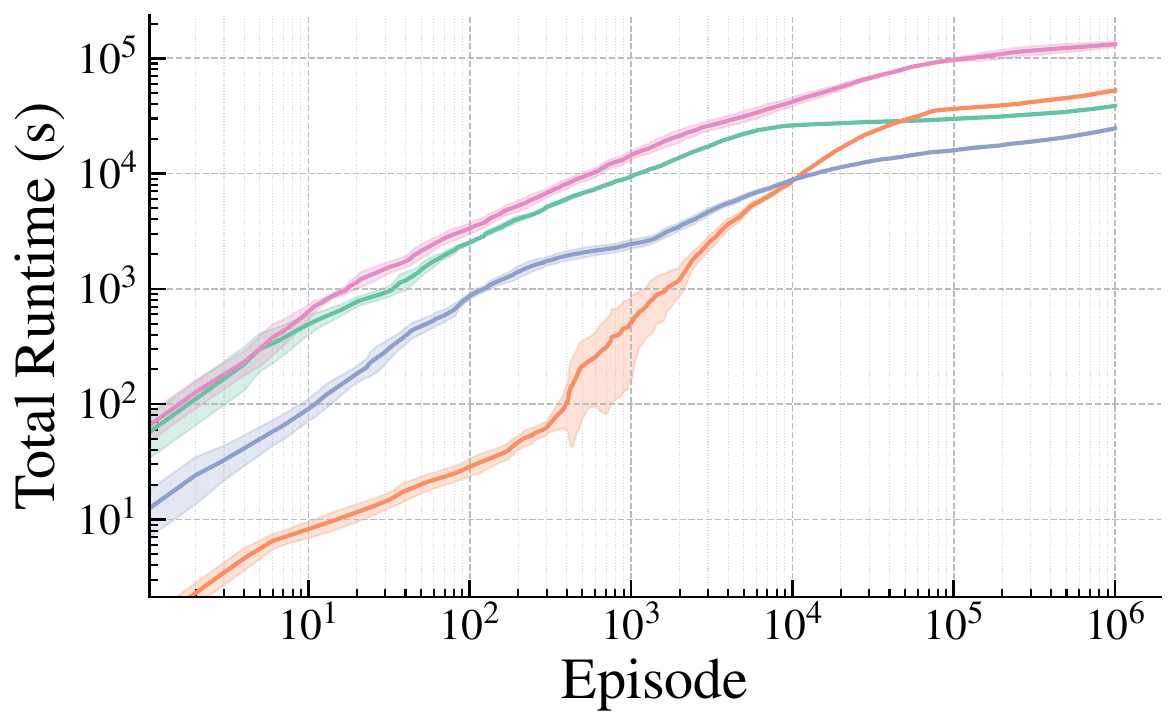}
      \label{fig:4b}
    \end{subfigure}
    \caption{Drone}
    \label{fig:stack4}
  \end{subfigure}

  \caption{Extended results for robust policy learning. For each benchmark, the upper plots compare performance and guarantees, and the lower plots show the total runtimes of each method.}
  \label{fig:learning_extended}
\end{figure*}

\begin{table*}[!htbp]
\centering
\resizebox{0.8\linewidth}{!}{%
  \begin{tabular}{c c | cc | cc | cc | cc}
    \toprule
    \multirow{2}{*}{\textbf{Domain}} & \multirow{2}{*}{\textbf{Guarantee}}
      & \multicolumn{2}{c}{\textbf{Flat Learning}}
      & \multicolumn{2}{c}{\textbf{$L_1$}}
      & \multicolumn{2}{c}{\textbf{Interval-Arithmetic}}
      & \multicolumn{2}{c}{\textbf{McCormick}} \\
    \cmidrule(lr){3-4} \cmidrule(lr){5-6} \cmidrule(lr){7-8} \cmidrule(lr){9-10}
    & & \textbf{Time (s)} & \textbf{Samples}
      & \textbf{Time (s)} & \textbf{Samples}
      & \textbf{Time (s)} & \textbf{Samples}
      & \textbf{Time (s)} & \textbf{Samples} \\
    \midrule
    Aircraft & 0.50 & 872 & $10^6$ & 200 &  $3 \cdot 10^5$ & 130 &  $10^5$ & 290 &  $6 \cdot 10^4$ \\
    Frozen Lake & 72.84 & 9591 & $10^6$ & 6198 &  $1.2 \cdot 10^5$ & 4406 &  $3.6 \cdot 10^4$ & 3318 &  $1.6 \cdot 10^4$ \\
    Stock Trading & 3.90 & 493 & $10^6$ & 237 &  $5.5 \cdot 10^5$ & 142 &  $1.5 \cdot 10^5$ & 85 &  $7 \cdot 10^4$ \\
    SysAdmin & 35.34 & 51634 & $10^6$ & 14484 &  $7.5 \cdot 10^5$ & 2521 &  $4.5 \cdot 10^4$ & 3108 &  $1.3 \cdot 10^4$ \\
    Drone & 0.07 & 123357 & $10^6$ & 8259 &  $9.6 \cdot 10^3$ & 24416 &  $2.1 \cdot 10^4$ & 2555 &  $1.3 \cdot 10^3$ \\
    \bottomrule
  \end{tabular}%
}
\caption{Comparison of sample efficiency and runtime across methods required to reach the guarantee achieved by the slowest method after processing the full set of trajectories.}
\label{tab:comparison}
\end{table*}

\end{document}